\documentclass[final,12pt,cleveref,nameinlink]{conf2026} %

\title[Learning Conditional Averages]{Learning Conditional Averages}
\usepackage{times}
\usepackage{nicefrac}

\confauthor{%
	\Name{Marco Bressan} \Email{marco.bressan@unimi.it}\\
	\addr Università degli Studi di Milano, Italy
	\AND
	\Name{Nataly Brukhim}
	\Email{nbrukhim@princeton.edu}\\
	\addr{Institute for Advanced Study, USA}
	\AND 
	\Name{Nicolò Cesa-Bianchi} \Email{nicolo.cesa-bianchi@unimi.it}\\
	\addr Università degli Studi di Milano, Italy\\
	\addr Politecnico di Milano, Italy
	\AND
	\Name{Emmanuel Esposito} \Email{emmanuel@emmanuelesposito.it}\\
	\addr Università degli Studi di Milano, Italy
	\AND
	\Name{Yishay Mansour} \Email{mansour.yishay@gmail.com}\\
	\addr Tel Aviv University, Israel\\
	\addr Google Research
	\AND
	\Name{Shay Moran} \Email{smoran@technion.ac.il}\\
	\addr Technion, Israel\\
	\addr Google Research
	\AND
	\Name{Maximilian Thiessen} \Email{maximilian.thiessen@tuwien.ac.at}\\
	\addr TU Wien, Austria%
}

\usepackage{bm}
\usepackage{thmtools}
\usepackage{thm-restate}
\usepackage{framed}
\usepackage{algorithm}
\usepackage{algorithmic}

\usepackage{subcaption}

\usepackage[shortlabels]{enumitem}

\usepackage{mathabx} %

\renewcommand{\mathbf}{\boldsymbol}

\usepackage{mathrsfs} 

\renewcommand{\hat}{\widehat}

\def\X{{\mathcal X}}

\def\Y{{\mathcal Y}}

\def\E{{\mathbb E}}

\newcommand{\D}{\mathcal{D}}

\newcommand{\A}{\mathcal{A}}

\newcommand{\F}{\mathcal{F}}

\def\C{{\mathcal C}}

\def\bs{\mathbf{s}}

\newcommand{\ignore}[1]{}
\newcommand{\email}[1]{\texttt{#1}}

\DeclareMathAlphabet{\mathbfsf}{\encodingdefault}{\sfdefault}{bx}{n}

\let\Pr\relax
\DeclareMathOperator{\Pr}{\mathbb{P}}

\newcommand{\lrset}[1]{\left\{#1\right\}}

\renewcommand{\O}{\mathcal{O}}

\newcommand{\ind}[1]{\mathbb{I}\!\lrset{#1}}

\newcommand{\tvd}[1]{\left\lVert{#1}\right\rVert_{\text{TVD}}}

\newcommand{\eps}{\varepsilon}
\renewcommand{\epsilon}{\varepsilon}

\let\oldtfrac\tfrac
\renewcommand{\tfrac}[2]{\smash{\oldtfrac{#1}{#2}}}

\let\nablaold\nabla
\renewcommand{\nabla}{\nablaold\mkern-2.5mu}

\usepackage{bbm}

\newcommand{\Ev}{\mathcal{E}^{(i)}}

 \newcommand{\G}{\mathcal{G}}

\DeclareSymbolFontAlphabet{\mathbb}{AMSb}

\newcommand{\val}{\operatorname{V}}

\DeclareMathOperator{\scO}{\mathcal{O}}

\newcommand{\Ex}{\mathop{\mathbb{E}}}

\newcommand{\nat}{\mathbb{N}}

\usepackage{aliascnt}
\newaliascnt{appsec}{section}
\crefname{appsec}{appendix}{appendices}
\Crefname{appsec}{Appendix}{Appendices}
\aliascntresetthe{appsec}

\newcommand{\yD}{\widebar{c}} %

\newcommand{\mSmooth}{m}

\newcommand{\ymed}{y_{\mathrm{med}}}
\DeclareMathOperator{\vc}{VC}

\renewcommand{\tilde}{\widetilde}
\renewcommand{\hat}{\widehat}

\begin{document}

\maketitle

\begin{abstract}%
We introduce the problem of learning \emph{conditional averages} in the PAC framework.
The learner receives a sample labeled by an unknown target concept from a known concept class, as in standard PAC learning. However, instead of learning the target concept itself, the goal is to predict, for each instance, the average label over its \emph{neighborhood}---an arbitrary subset of points that contains the instance.
In the degenerate case where all neighborhoods are singletons, the problem reduces exactly to classic PAC learning. More generally, it extends PAC learning to a setting that captures learning tasks arising in several domains, including explainability, fairness, and recommendation systems. %
Our main contribution is a complete characterization of when conditional averages are learnable, together with sample complexity bounds that are tight up to logarithmic factors. The characterization hinges on the joint finiteness of two novel combinatorial parameters, which depend on both the concept class and the neighborhood system, and are closely related to the independence number of the associated neighborhood graph.
\end{abstract}

\begin{keywords}%
  supervised learning, PAC learning, graphs %
\end{keywords}

\section{Introduction}\label{sec:intro}
We introduce the following problem of learning conditional averages over neighborhoods.
We are given an arbitrary domain $\X$ and a concept class $\C\subseteq \{0,1\}^\X$. Each point $x \in \X$ in the domain has an associated \emph{neighborhood} $N[x]$, an arbitrary subset of $\X$ containing $x$ itself. For example,
if $\X$ is a metric space, then $N[x]$ could be a ball centered on $x$.
We can think of the neighborhoods as encoded by a directed graph $G=(\X,E)$, so that $(x,y) \in E$ whenever $y \in N[x]$.
Now, the learner receives a labeled sample $S=(x_1,c(x_1)),\dots,(x_m,c(x_m))$ from an unknown distribution $\D$, where $c\in\C$ is an unknown target concept.
The goal of the learner is to predict good estimates of the \emph{average} label in the neighborhood of $x$,
\begin{equation}
    \yD(x) = \E_{x'\sim\D}\bigl[c(x')\mid x'\in N[x]\bigr] \;.
\end{equation}
More precisely, like in standard PAC learning, we want the learner to output a predictor that, with probability $1-\delta$, has expected squared loss at most $\eps$ over $\D$.

This problem is interesting in that it captures tasks that appear in various settings related to fairness and explainability.
For example, in fairness we want that the average predictions (say, whether a person is hired or not) be roughly equal across certain subgroups \citep{hardt2016equality,rothblum2021multi,hsu2024distribution}.
We can encode these subgroups as neighborhoods (e.g., cliques) in the graph $G$.
In explainability, local explainers typically use neighborhoods around data points to describe the local behavior of a complex global classifier \citep{ribeiro2018anchors}.
Both scenarios require good estimates of the average predictions in each neighborhood, i.e., to estimate the conditional probabilities in (potentially small) regions.
A different type of applications can be envisioned in scenarios where classification is just an initial step in a pipeline of learning tasks where the final goal is to predict various aggregate statistics of the given population.
For example, given a sample of people and their income, we might want to predict average incomes across certain demographic subgroups, such as age, gender, or location.
Moreover, a certain level of anonymity might be required: while the machine learning system has access to the individual income data, users should only see the predicted averages.
Furthermore, in recommendation tasks such as collaborative filtering \citep{mnih2007probabilistic}, systems provide recommendations based only on partial knowledge of the user features. This corresponds to our learning problem of predicting the average label over neighborhoods, which here correspond to all users with the same values over a subset of features.
Another related learning problem is \emph{learning from label proportions}.
An important instantiation of this setting \citep{kuck2005learning,busa2023easy,busa2025nearly,brahmbhatt2023pac,li2024optimistic} can be viewed as complementary to ours: there, the learner only gets to see the average labels and must predict the true labels of the individual data points. %

Our problem looks interesting from a technical perspective, too.
To begin with, it encompasses and generalizes other standard learning tasks. For instance, when all the neighborhoods are singletons, $N[x]=\{x\}$, we recover standard supervised PAC learning.
When instead all neighborhoods are the full domain, $N[x]=\X$, the problem becomes estimating the average label under $\D$.
Interestingly, we can also show that our problem is at least as expressive as learning with partial concept classes (see \Cref{sec:graphpairs}).
On the other hand, the problem is interesting in that natural approaches and standard combinatorial parameters seem to fail.
For example, estimating $\yD$ via uniform convergence leads to suboptimal bounds and additional dependencies; see, e.g., \citet{grunewalder2018plug} and \citet{balsubramani2019adaptive}.
Moreover, when learning conditional averages, neither the VC dimension of the concept class nor the VC dimension of the neighborhood system succeed in capturing learnability.
In fact, they appear very far from doing so: there are cases where learning with the full class $\C=\{0,1\}^\X$ is possible, and (perhaps more surprisingly) cases where $\yD$ is not learnable even when the class consists of just a single known concept, $\C=\{c\}$.
The reason is that the average labels $\yD$ depend both on the concept $c$ and the distribution $\D$, and this intertwinement makes things subtly different from other, more standard settings.

\subsection{Main results}
Our main result is a full combinatorial characterization of learnability of conditional averages, for any concept class~$\C$ and neighborhood graph $G$, as well as bounds on the required sample complexity tight up to logarithmic factors. Interestingly, neither the class $\C$ nor the graph $G$ alone provide any meaningful characterizations.
We introduce two novel combinatorial parameters that depend on both $\C$ and $G$, and related to the independence number $\alpha(G)$ of $G$. The first one, $\alpha_1(G,\C)$, is the cardinality of the largest independent set in $G$ that is shattered by $\C$. In particular, $\alpha_1(G,\C)$ specializes to the standard VC dimension of $\C$ when neighborhoods are singletons, i.e., $G$ has no edges. 
The second parameter, $\alpha_2(G,\C)$, is the cardinality of the largest independent set $I$ for which there exists a concept $c\in \C$ such that every $x\in I$ has a neighbor in $G$ with opposite $c$-label.

We show that the finiteness of both these parameters is necessary and sufficient for learning. Moreover, we prove lower and upper bounds on the sample complexity $m(\eps, \delta)$ of any class $\C$ and graph $G$, of the following form (we omit the dependency of $\alpha_1,\alpha_2$ on $(G,\C)$ for brevity): 
\begin{equation} \label{eq:mainres}
\Omega\left(\frac{\alpha_1+\alpha_2/\log\alpha_2+\log(1/\delta)}{\eps}\right) \le m(\eps, \delta) \le \scO\left(\frac{\alpha_1+\alpha_2\log(1/\eps)}{\eps}\log\frac{1}{\delta}\right) \,,
\end{equation}
where $\eps$ is the accuracy  (w.r.t.\ the square loss) and $\delta$ the confidence parameter. 

Our learning algorithm is as follows: if a test point $x$ is adjacent to other points in the training sample, we return the empirical average of their labels. Otherwise, we run the one-inclusion graph algorithm \citep{haussler1994predicting} on all isolated points in the graph induced by the training set and the test point. This leads to an in-expectation guarantee, which we subsequently turn to a high-probability guarantee by amplifying the confidence through a median-based argument. %

\subsection{Further related work} \label{sec:further}
One motivation to study this particular learning problem comes from a line of work on the theory of explainable machine learning. %
\emph{Local explainers} are one of the primarily tools to study and explain the local behavior of a complex machine learning system. One well-known approach is that of \emph{anchors} \citep{ribeiro2018anchors}: to explain the label of a point, %
a small region around the point is used such that most points in the region have the same label. Many similar such local explanation methods exist \citep{ribeiro2016should,ancona2018towards}. This problem was formalized, among others, by \citet{dasgupta2022framework}. In a follow-up work, \citet{bhattacharjee2024auditing} studied the problem of \emph{auditing} local explanations, i.e., verifying whether the given explanation---the local classifier---has high accuracy in the chosen region. Our results are directly applicable to this auditing problem in the case of constant local classifiers (i.e., anchors). We also provide further support for their negative result: %
they state that typical local explainers require extremely small regions (e.g., exponentially small in the dimension of $\X=\mathbb{R}^d$) to be accurate, which leads to a large sample complexity to audit them. This corresponds in our case to graphs with large or infinite independence number $\alpha(G)$. For example, many small disjoint regions in $\mathbb{R}^d$ lead to a graph with large $\alpha(G)$ in our case. 

Another related work is the smoothed analysis of PAC learning by \citet{chandrasekaran2024smoothed}. Instead of having to predict averaged labels, their goal is to output a regular binary classifier that, however, competes with the best possible loss averaged over Gaussian perturbations of the test point, similar to our averages over neighborhoods. While they provide a novel (smoothed) analysis of standard binary classification, we tackle a novel learning problem instead.

\section{Learning average labels}

We start with the graph-theoretic notation used in the rest of the paper.
We denote by $G = (V,E)$ a simple directed graph with anti-parallel edges allowed.
That is, for any distinct $u,v \in V$, both $(u,v)$ and $(v,u)$ may belong to $E$ (but no parallel edges are allowed).
We remark that $V$ can be infinite, and it will also be convenient to think of $E$ as a relation over $V$.
The \emph{out-neighborhood} of $x \in V$ is $N^+(v)=\{v'\in V \mid (v, v')\in E\}$, and the \emph{in-neighborhood} is $N^-(v)=\{v'\in V \mid (v',v)\in E\}$.
We may also use $N(v) = N^+(v)$ for brevity.
The \emph{closed} out-neighborhood of $v \in V$ is the set $N[v]=N^+(v)\cup\{v\}$; we often refer to it simply as the neighborhood of $v$.
A subset $V' \subseteq V$ is independent in $G$ if for every distinct $u,v \in V'$ we have $u \notin N^+(v) \cup N^-(v)$.
The independence number $\alpha(G)$ of $G$ is the cardinality of its largest independent set (note that we may have $\alpha(G)=\infty$).

We now define the setting of the learning problem. 
Let $\X$ be a non-empty instance space and $\C\subseteq \{0,1\}^\X$ a concept class.
We assume a fixed neighborhood structure over $\X$ encoded by a simple directed graph $G=(\X, E)$, with anti-parallel edges allowed.
The learner does not need to know $G$ in advance; it is sufficient that it can access $G$ through an edge oracle, i.e., an oracle that given $(u,v) \in V^2$ tells whether $(u,v) \in E$. As in realizable PAC learning, the learner receives a labeled sample $S=\bigl((x_1,c(x_1)),\dots,(x_m,c(x_m))\bigr)$ with each $x_i$ i.i.d.\ from an unknown distribution $\D$ and where $c\in\C$ is the unknown target concept. In contrast to standard PAC learning, our goal is to predict the \emph{average labels} over each (closed) neighborhood:\footnote{We assume throughout that the
    directed edge relation of the graph is measurable, as well as any additional measurability assumptions that are required by the one-inclusion graph algorithm \citep{haussler1994predicting}. 
    }
\begin{equation}
    \yD(x) = \E_{x' \sim \D}\Bigl[c(x') \mid x' \in N[x]\Bigr]\,,
\end{equation}
where we suppress the dependence of $\yD$ on both the underlying graph $G$ and distribution $\D$ when it is clear from context. 
In particular, the goal is to design a learning rule $\A$ that given a sample $S$ outputs a predictor $h : \X \to [0,1]$ such that with probability at least $1-\delta$,
\begin{equation}\label{eq:loss}
    L(h) = \E_{x \sim \D} \left[\bigl(h(x) - \yD(x)\bigr)^2\right] \le \eps \,,
\end{equation}
which correspond to the risk of $h$ with respect to the square loss.
For convenience, we will sometimes use the loss notation $\ell_x(h) = \bigl(\yD(x) - h(x)\bigr)^2$ for individual points $x$. We remark that other natural loss functions are possible; %
see further discussion in \Cref{sec:discussion}.

Observe that, although the labels determined by the concept $c$ are binary, rather than a  classification problem our learning problem is a regression task. We define it formally below. 
\begin{definition}[$(G,\C)$-learner]
Let $\C\subseteq\{0,1\}^\X$ and let $G=(\X,E)$ be a directed graph. A learning rule $\A$ is a \emph{$(G,\C)$-learner} if there exists a sample complexity $m:(0,1)^2\to\nat$ such that for every distribution $\D$ over $\X$, every $c\in \C$, and every $\eps,\delta\in(0,1)$, the following holds. When given a multiset $S$ of $m \ge m(\eps,\delta)$ i.i.d.\ examples generated by $\D$ and labeled by $c$, then the learning rule returns a predictor $h_S = \A(S)$ such that $L(h_S) \le \eps$ with probability at least $1-\delta$ over $S$. 
\end{definition}

We now define two combinatorial parameters that characterize the learnability of our problem.  
The first can be thought of as corresponding to a \textit{shattered independent set}, and the second to a \textit{bichromatic independent set}. Formally, these parameters are defined as follows.
\begin{definition}[Parameters $\alpha_1$ and $\alpha_2$]\label{def:params}
Let $G=(\X,E)$ be a directed graph, $\C\subseteq\{0,1\}^\X$ a class, and $c\in\C$ a concept.
\begin{itemize}[leftmargin=*]
  \setlength\itemsep{0em}
    \item \textbf{Largest shattered IS.} Denote by $\alpha_1(G,\C)$ the size of a largest independent set $I\subseteq \X$ of $G$ that is shattered by $\C$, that is, $|\{c\cap I : c\in\C\}|=2^{|I|}$. 
    \item \textbf{Largest bichromatic IS.} Denote by $\alpha_2(G,c)$ the size of a largest independent set $I\subseteq\X$ of $G$ such that each vertex in $I$ has a neighbor in $G$ with opposite label with respect to $c$. Define $\alpha_2(G,\C)=\sup_{c\in\C}\alpha_2(G,c)$.
\end{itemize}
\end{definition}
 We remark that we can equivalently define $\alpha_2(G,c)$ in the following way.
Let $X_c\subseteq \X$ be the set of vertices $x$ that have a neighbor $x' \in N(x)$ in $G$ such that $c(x)\neq c(x')$. Denote by $G_c$ the subgraph of $G$ induced by $X_c$. Then, we have $\alpha_2(G,c)=\alpha(G_c)$.\\

\noindent 
Our main result is the following characterization of learnability.
\begin{theorem}\label{thm:characterization}
      Let $\C\subseteq\{0,1\}^\X$ and let $G=(\X,E)$. Then:
      \[
      \text{There exists a }(G,\C)\text{-learner } \iff \alpha_1(G,\C) + \alpha_2(G,\C)<\infty \;.
      \]
\vspace*{.5em}
\end{theorem}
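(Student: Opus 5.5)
The plan is to prove the two directions separately, each via a quantitative bound of the form \eqref{eq:mainres}.

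\emph{Sufficiency.} Assume $d_1=\alpha_1(G,\C)$ and $d_2=\alpha_2(G,\C)$ are finite. I will analyse the learner sketched in the introduction. On sample $S$ and test point $x$, it does the following.
\begin{enumerate}[(i)]
\item If $N[x]$ contains sample points, it outputs the empirical average of their labels.
\item Otherwise it treats the points of $S\cup\{x\}$ that are isolated in the induced subgraph as an instance of PAC prediction. Since every independent set shattered by $\C$ has size at most $d_1$, the one-inclusion graph algorithm is run on these points with the class restricted to them.
\end{enumerate}
The goal is an expected-loss bound $\E_S[L(h_S)]\lesssim (d_1+d_2\log m)/m$, proved by the usual leave-one-out/exchangeability argument over $m+1$ points. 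I split the test-point loss according to $x$'s neighborhood.

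If $N[x]$ is $c$-monochromatic, then $\yD(x)=c(x)$, and case (i) predicts exactly. In case (ii), $x$ is isolated among the $m+1$ points. The one-inclusion graph bound then gives expected error $\le d_1/(m+1)$, after a symmetrization over which isolated points are labeled.

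If $N[x]$ is bichromatic, then $x\in X_c$. I control two quantities:
\begin{enumerate}[(a)]
\item The probability that $x\in X_c$ yet few sample points fall in $N[x]$. Here I use that $G_c$ has independence number at most $d_2$. A covering argument bounds the number of points among $m+1$ exchangeable points of $X_c$ that are isolated (or nearly so) in the induced subgraph, roughly by $d_2$ times a logarithmic factor. This is a bound via greedy independent sets: a point with small neighborhood mass is rarely hit.
\item The variance of the empirical average, which is about $1/k$ when $k$ samples hit $N[x]$. Summing over the $m+1$ points by exchangeability and using the $\alpha(G_c)\le d_2$ structure gives a total of $O(d_2\log m)$.
\end{enumerate}
The main obstacle is (a)+(b): turning "independence number $\le d_2$ of $G_c$" into a bound on $\sum_{x}1/(1+|N[x]\cap S|)$. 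This requires care with directed edges, since $x'$ may lie in $N[x]$ without $x\in N[x']$. I would first try a greedy argument that repeatedly extracts a vertex of minimal hit count together with its in- and out-neighbors. Each round produces a new vertex of an independent set, and the argument is organised into dyadic scales of neighborhood mass to obtain the $\log$ factor.

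Finally, I convert the in-expectation guarantee into one with probability $1-\delta$. I run the learner on $O(\log(1/\delta))$ independent subsamples and combine the resulting predictors with a median-type selection: a validation sample picks the predictor of smallest estimated loss, or equivalently we take a median in function space. The resulting sample complexity is $m=O\bigl(\frac{d_1+d_2\log(1/\eps)}{\eps}\log\frac1\delta\bigr)$.

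\emph{Necessity.} For $\alpha_1=\infty$, take a shattered independent set $I$ of size $d$ and let $\D$ be uniform over $I$, in the standard no-free-lunch style. Since $I$ is independent, each $N[x]$ meets the support only at $x$, so $\yD(x)=c(x)$. The problem therefore reduces to PAC learning a shattered set, which gives a lower bound of $\Omega(d/\eps)$.

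For $\alpha_2=\infty$, fix $c$ and an independent set $I\subseteq X_c$ of size $d$, and for each $x\in I$ pick an oppositely labeled neighbor $x'$. The construction is a family of distributions: most mass on $I$, and a tiny mass $\eta$ on either $x'$ or no point, chosen by hidden bits. Then $\yD(x)$ is either $c(x)$ or close to a fixed $c(x)$-independent average, yet the sample rarely hits the $x'$. This yields indistinguishability and constant loss on a constant fraction of $I$ unless $m\gtrsim d$. Choosing the masses so the loss scales with $\eps$ gives the stated $\Omega(d/(\eps\log d))$ bound.

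The subtle points are overlapping neighbor sets, where some $x'$ are shared, and the case $x'\in I$. The latter is excluded because $I$ is independent. Overlaps I would handle by pruning $I$ to a subset whose chosen neighbors are distinct, losing at most a logarithmic or constant factor. Infinitude of either parameter then rules out any finite $m(\eps,\delta)$.
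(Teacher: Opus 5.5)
The $\alpha_1$ half of necessity and the overall shape of your upper bound match the paper. Your necessity argument for $\alpha_2$, however, has a genuine gap.

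\textbf{Main gap: the lower bound for $\alpha_2$.} You hide the bits in the mass placed on the chosen opposite-labeled neighbors $x'$. This only works if each $N(x)$, $x\in I$, meets the support in its own $x'$ and nothing else. Your proposed repair, pruning $I$ until the chosen neighbors are distinct at a logarithmic or constant cost, is false. Take the star graph with leaves labeled $0$, center $r$ labeled $1$, and $\C=\{c\}$. Every large independent subset of $X_c$ consists of leaves whose only opposite-labeled neighbor is $r$. Pruning therefore leaves one point, and your family carries a single bit. Yet $\yD(x)=\D(r)/(\D(r)+\D(x))$ for each leaf, and that is exactly where the hardness lies. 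Distinctness is not even sufficient. If every $x\in I$ points to every $z$ in a set $B$ of opposite-labeled points, the chosen neighbors can be distinct. But with the masses on $I$ fixed, every $\yD(x)$ depends on your bits only through $\D(B)$. So necessity of $\alpha_2<\infty$ is not established.

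The missing idea, used in the proof of \Cref{lem:LB2}, is to keep the masses on the opposite-labeled points fixed and hide the bits in the masses of the independent points themselves: $\D_{\bs}(x_i)=p(x)(1+s_i\sqrt{\eps})$. Since $A'$ is independent, $x_i$ lies in no $N[x_j]$ with $j\neq i$. Each bit therefore affects only $\yD(x_i)$, however the neighborhoods overlap on $B'$. The learner must then estimate $K$ frequencies of order $1/K$ to relative accuracy $\sqrt{\eps}$, which costs $\Omega(K/\eps)$ samples by KL/Pinsker. The logarithmic loss comes from binning $A$ dyadically by $|N(x)\cap B|$, not from pruning for distinctness. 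Binning ensures $\D(N(x)\cap B')/\D(x)\in[1,2]$ for all $x\in A'$, so a $\sqrt{\eps}$ relative perturbation of $\D(x_i)$ moves $\yD(x_i)$ by $\Theta(\sqrt{\eps})$.

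\textbf{Upper bound.} Your sample-level route is a legitimate alternative to combining the paper's distributional \Cref{lem:lightBichromaticVertex} with \Cref{lem:heavyVertex}. Conditioning on $x_i$ and its hit count $k_i$ gives loss at most $1/(1+k_i)$. It then remains to bound $\sum_i 1/(1+k_i)$ deterministically over the bichromatic points of the realized multiset, which sidesteps measurability issues.

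The greedy step you propose fails, though. A vertex of minimal hit count may have huge in-degree: the sink of a transitive tournament has hit count $0$ and every other vertex as in-neighbor. Removing its in-neighborhood then destroys the count. Instead, inside the subgraph induced by bichromatic sample points with hit count at most $k$, pick a vertex whose in-degree is at most its out-degree. Such a vertex exists because in- and out-degrees have equal sums, which is the finite analogue of \Cref{clm:degree_symm}. This gives at most $(2k+1)\alpha_2$ bichromatic points with hit count at most $k$, and summing over $k$ gives $O(\alpha_2\log(m/\alpha_2))$.

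Finally, you cannot select among the $O(\log(1/\delta))$ predictors with a validation sample. The target $\yD$ is never observed, and empirical loss against the binary labels does not estimate $L(h)$. The pointwise median of \Cref{lemma:median} works, but it is not ``equivalent'' to validation.
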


\section{Examples and special cases}
Before turning to the proof of our main result, we discuss some special cases to illustrate the role of the two parameters $\alpha_1$ and $\alpha_2$.
Recall that, in the degenerate case where $G$ has no edges, the problem coincides with standard PAC learning.
In particular, if $G$ has no edges, then for every concept class $\C$ we have $\alpha_2(G,\C) = 0$, while  $\alpha_1(G,\C)$ becomes just the VC dimension of $\C$. 

The following are two extreme examples of concept classes and corresponding corollaries of \Cref{thm:characterization}. 
In a nutshell, \Cref{cor:full,cor:singleton} show that the VC dimension of $\C$ alone does not determine learnability in our model.
The first case we consider is the full concept class $\C=\{0,1\}^\X$.
Since any independent set in the graph $G$ is shattered by $\C$, we have $\alpha_1(G,\C)=\alpha(G)$.
Moreover, as $\alpha_2(G,\C)\le \alpha(G)$ by definition, from \Cref{thm:characterization} we obtain the following corollary.
\begin{corollary}[Full class]\label{cor:full} \!\!Let $G=(\X,E)$.
    There is a $(G,\{0,1\}^\X)$-learner if and only if $\alpha(G)<\infty$. 
\end{corollary}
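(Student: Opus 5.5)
The plan is to derive the corollary directly from \Cref{thm:characterization}, instantiated with $\C=\{0,1\}^\X$. By the theorem, a $(G,\{0,1\}^\X)$-learner exists if and only if $\alpha_1(G,\{0,1\}^\X)+\alpha_2(G,\{0,1\}^\X)<\infty$. It therefore suffices to show that this sum is finite exactly when $\alpha(G)<\infty$. I would establish this by computing $\alpha_1$ exactly and bounding $\alpha_2$ between $0$ and $\alpha(G)$.

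\emph{First step: $\alpha_1(G,\{0,1\}^\X)=\alpha(G)$.} The full class shatters every subset of $\X$. Given any $I\subseteq\X$ and any pattern $b\in\{0,1\}^I$, take the concept that equals $b$ on $I$ and $0$ elsewhere; it realizes the pattern. Hence every independent set of $G$ is shattered, so the largest shattered independent set is simply a largest independent set. This gives equality, including the case where both sides are $\infty$. If $\alpha(G)$ is infinite because $G$ has independent sets of every finite size but no infinite one, I read both suprema as $\infty$, so the identity still holds.

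\emph{Second step: $0\le \alpha_2(G,\{0,1\}^\X)\le\alpha(G)$.} For every $c$, the set realizing $\alpha_2(G,c)$ is by \Cref{def:params} an independent set of $G$ with an extra property. Equivalently, $\alpha_2(G,c)=\alpha(G_c)$ with $G_c$ an induced subgraph of $G$. Either way $\alpha_2(G,c)\le\alpha(G)$, and taking the supremum over $c$ preserves the bound.

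Combining the two steps gives
\[
\alpha(G)\;\le\;\alpha_1(G,\{0,1\}^\X)+\alpha_2(G,\{0,1\}^\X)\;\le\;2\alpha(G).
\]
So the sum is finite if and only if $\alpha(G)<\infty$, and \Cref{thm:characterization} yields the corollary. There is no real obstacle here. The only point needing care is that the convention for infinite values (supremum of sizes) is applied consistently to $\alpha$, $\alpha_1$ and $\alpha_2$.
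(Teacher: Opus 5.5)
Your proposal is correct and follows the paper's argument: the full class shatters every independent set, so $\alpha_1(G,\{0,1\}^\X)=\alpha(G)$, and $\alpha_2(G,\{0,1\}^\X)\le\alpha(G)$ holds by definition. Applying \Cref{thm:characterization} then gives the corollary. Your sandwich $\alpha(G)\le\alpha_1+\alpha_2\le 2\alpha(G)$ and your remark on treating infinite suprema consistently make explicit what the paper leaves implicit.
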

Thus, even on the full class $\C=\{0,1\}^{\X}$, one may still be able to learn conditional averages, depending on the graph $G$.
For a concrete example, take $G$ to be the full graph, so that $N[x]=\X$ for every $x \in \X$.
In this case $\alpha(G)=1$, so by \Cref{cor:full} a $(\C,G)$-learner exists; and, indeed, in this case $\yD(x) = \E_{x \sim D}c(x)$ for every $x$, so the problem is just learning the average label under $\D$.

The other extreme case is the class consisting of a single concept, i.e., $\C=\{c\}$ for some $c: \X \to \{0,1\}$. As in this case $\alpha_1(G,\{c\})=0$, \Cref{thm:characterization} yields the next corollary.
\begin{corollary}[Singleton class]\label{cor:singleton}
Let $G=(\X,E)$ and $c:\X\to\{0,1\}$.    There exists a $(G,\{c\})$-learner if and only if $\alpha_2(G,c)<\infty$.
\end{corollary}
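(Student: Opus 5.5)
This follows from \Cref{thm:characterization} once we note that $\alpha_1(G,\{c\})=0$. I will check this first, then substitute it into the theorem.

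\emph{Step 1: $\alpha_1(G,\{c\})=0$.} Take any independent set $I$ of $G$ shattered by $\{c\}$. Shattering requires $|\{c\cap I\}|=2^{|I|}$. A single concept produces exactly one pattern on $I$, so $2^{|I|}\le 1$. Hence $I=\emptyset$ and $\alpha_1(G,\{c\})=0$; the empty set is trivially independent and shattered. This holds for any graph $G$, including one with $\alpha(G)=\infty$.

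\emph{Step 2: $\alpha_2$ of a singleton class.} By \Cref{def:params}, $\alpha_2(G,\C)=\sup_{c'\in\C}\alpha_2(G,c')$. When $\C=\{c\}$ the supremum ranges over one element, so $\alpha_2(G,\{c\})=\alpha_2(G,c)$.

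\emph{Step 3: apply the theorem.} With $\C=\{c\}$, \Cref{thm:characterization} says a $(G,\{c\})$-learner exists if and only if $\alpha_1(G,\{c\})+\alpha_2(G,\{c\})<\infty$. By Steps 1 and 2 this sum equals $0+\alpha_2(G,c)$, so the condition is exactly $\alpha_2(G,c)<\infty$.

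All the substantive work lies in \Cref{thm:characterization}, which we are taking as given. The only point needing care is the convention that $\alpha_1=0$ when only the empty set is shattered, and Step 1 settles it.
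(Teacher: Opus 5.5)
Your proposal is correct and follows the paper's own argument: the paper likewise notes that $\alpha_1(G,\{c\})=0$ and reads the corollary off \Cref{thm:characterization}. Your explicit checks that a single concept shatters only $\emptyset$ and that $\alpha_2(G,\{c\})=\alpha_2(G,c)$ fill in the routine details the paper leaves implicit.
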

Thus, learning conditional averages can be hard even when the learner knows the target concept $c$.
The reason is that, even with $c$ known, learning $\yD$ requires one to estimate the average labels in each neighborhood, and those depend on the (unknown) distribution.

\subsection{Special neighborhood graphs}
We now turn to special cases of graphs that illustrate the role of the graph structure in our task. 

\paragraph{Complete graphs.}
If $G=(\X,E)$ is a complete graph, that is, $N[x]=\X$ for all $x\in\X$, then our learning problem simply requires to estimate the average label $\E_{x\sim\D}[c(x)]$ under the distribution $\D$ and ground truth $c$. In this case, $\alpha_1(G,\C),\alpha_2(G,\C)\le \alpha(G)=1$ as there are no larger independent sets in $G$. The problem is learnable---independently of the complexity of $\C$---by simply predicting the empirical mean of labels using a sample of size $\scO(1/\eps)$.
\paragraph{Tournament graphs.} Similarly, on tournament graphs, which are orientations of complete (undirected) graphs, learning is easy. By definition we again have $\alpha_1(G,\C),\alpha_2(G,\C)\le \alpha(G)=1$ and thus we only need  a sample of size $\widetilde\scO(1/\eps)$ even for $\C=\{0,1\}^\X$. 
This is somewhat surprising, since unlike the complete graph case, neighborhoods in a tournament graph can overlap in complex ways. Specifically, viewing neighborhoods as a set system, the \emph{VC dimension of the set system} in tournament graphs can be as large as $\log|\X|$, whereas in the complete graph case it is exactly $1$. These examples show that neighborhood complexity does not determine learnability.\\

In both of the examples above, the parameter $\alpha_2$ is small. In contrast, a simple example of a graph $G$ with a \textit{large} $\alpha_2$ is a  star graph: a tree graph with one central root node and all other nodes as leaves. Then, let $c$ be a concept that assigns the same label to all leaves, and the opposite label to the root. In this case $\alpha_2(G,c)$ will correspond to the number of leaves.

\section{Main results} \label{sec:main}

In this section we describe our algorithmic approach to the conditional average  learning problem, followed by the analysis of upper and lower bounds on the sample complexity of a $(G,\C)$-learner.

\subsection{Upper bound}
We first present our learning algorithm, described in \Cref{alg:main} below.
Then, in  \Cref{thm:UBexpectation} we give a bound on the expected error of \Cref{alg:main}, and in \Cref{thm:highProbUB} we give the high-probability bound.
The following \Cref{alg:main} relies on the classic One-Inclusion Graph (OIG) algorithm as a sub-routine (see \Cref{app:oi} for the algorithm and further details). 
\begin{algorithm}[h!]\SetAlgoNoEnd
\caption{Conditional Average Learning Algorithm for Graphs}
\label{alg:main}
\textbf{Input}: Training set \( S = \{(x_i, y_i)\}_{i=1}^m \), test point \( x \).
\BlankLine

Let $\hat G$ be the subgraph of $G$ induced by $\{x_1,\dots,x_m,x\}$.\;

\uIf{$x$ has a neighbor in $\hat G$}{
    \textbf{Return} the empirical fraction of neighbors with label $1$ in $N_{\hat G}[x]$.\;
}\Else{
    Let $I$ be the set of isolated vertices of $\hat G$.\;
    
    Run the OIG predictor (see \Cref{algo:one_inc}) on $(I, \C|_I$) and \textbf{return} its prediction.\,
}    
\end{algorithm}

 \begin{theorem}\label{thm:UBexpectation}
     Let $\C\subseteq\{0,1\}^\X$ and let $G=(\X,E)$ be a directed graph with $\alpha_1(G,\C)+\alpha_2(G,\C)<\infty$. Then, for every distribution $\D$ over $\X$, every $c\in \C$ and every $\eps\in(0,1)$, if $S$ is an i.i.d.\ sample over $\D$ of size $m \ge m(\eps) = \O\left(\frac{\alpha_1(G,\C)+\alpha_2(G,\C)\log(1/\eps)}{\eps}\right)$ then $\E_{S\sim \D^m}[L(h_S)] \le \eps$, where $h_S = \A(S)$ is defined by applying \Cref{alg:main} over $S$ and an input point $x$. 
\end{theorem}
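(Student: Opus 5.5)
The plan is a leave-one-out (exchangeability) argument. Draw $m+1$ i.i.d.\ points $x_1,\dots,x_{m+1}$ and let $x=x_{m+1}$ be the test point. Then $\E_S[L(h_S)]$ equals the expected loss of \Cref{alg:main} on a uniformly random held-out point of the $m+1$ points. Let $\hat G$ be the graph induced by all $m+1$ points. Each point is either \emph{non-isolated} or \emph{isolated} in $\hat G$, and I will bound the loss on these two types separately.

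\textbf{Isolated points.} The isolated points of $\hat G$ form an independent set $I$ of $G$, and the algorithm runs the OIG predictor on $(I,\C|_I)$. An isolated $x$ with no $G$-neighbor of opposite label has $\yD(x)=c(x)$ exactly. If instead $x$ has such a neighbor, $x$ lies in $X_c$, so at most $\alpha_2$ such isolated points exist, and each contributes loss at most $1$.

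For the points of the first kind, I would use the OIG guarantee restricted to the subset of $I$ where $\yD=c$. That subset is an independent set, so any set shattered by $\C|_I$ has size at most $\alpha_1$. The standard OIG leave-one-out bound then gives expected mistakes at most $\alpha_1/(|I|)$ per point, i.e.\ at most $\alpha_1$ mistakes in total over the $m+1$ rotations. Since the loss on these points is at most the $0/1$ loss, the isolated points contribute $O((\alpha_1+\alpha_2)/m)$.

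There is one subtlety: running OIG on all of $I$, including the $\le\alpha_2$ bichromatic points, only increases the shattering-based bound by at most $\alpha_2$. This is because the edge density of the one-inclusion graph is bounded by the VC dimension of $\C|_I$. That dimension is at most $\alpha_1$, since $I$ is independent, so this part is actually fine directly.

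\textbf{Non-isolated points.} For these the algorithm returns the empirical mean of $c$ over the other sample points in $N[x]$, plus $x$ itself. I would split by the mass $p(x)=\D(N[x])$.

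\emph{Case 1: $p(x)\ge k/m$ with $k\approx\log(1/\eps)$.} The number of sample points in $N[x]$ is roughly Binomial$(m,p(x))$. The variance of an empirical mean of $n$ Bernoulli draws is at most $1/n$, so the expected squared error is $O(1/(mp(x)))$ up to the small-count events.

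\emph{Case 2: small $p(x)$.} If $c$ is constant on $N[x]$, the empirical average is exact and the loss is zero. Otherwise $x$ is bichromatic, i.e.\ $x\in X_c$. So the only points with error are those in $X_c$, and for these I need
\[
\E_x\Bigl[\ind{x\in X_c}\min\bigl(1,\tfrac{1}{m\,\D(N[x])}\bigr)\Bigr]\le O\Bigl(\tfrac{\alpha_2\log(1/\eps)}{m}\Bigr).
\]
The key lemma is a covering argument. Greedily choose points of $X_c$ in order of increasing neighborhood mass. Each chosen point $x$ is charged for the points $y$ adjacent to it, and these $y$ have $\D(N[y])\ge\D(N[x])$. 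The chosen points form an independent set in $G_c$, so there are at most $\alpha_2$ of them.

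Bucket the masses into dyadic scales $2^{-j}$ with $j\le\log(1/\eps)$. Below scale $\eps/\alpha_2$ the total mass is negligible. Within each scale, the $X_c$-mass on points with $\D(N[x])\approx 2^{-j}$ that are dominated by the greedy independent set is at most $\alpha_2\cdot O(2^{-j})$. The factor $1/(m2^{-j})$ then gives $O(\alpha_2/m)$ per scale, and summing over the $O(\log(1/\eps))$ scales gives the claim.

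The main obstacle is handling directed edges. A point $y$ adjacent to a chosen $x$ only via in-edges need not satisfy $N[y]\supseteq$ anything useful. I would first try to use an independent set of the underlying undirected graph (as in the definition) and charge $y$ to its out-neighbors' masses, possibly at the cost of extra constant factors.

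Finally, set $m=C(\alpha_1+\alpha_2\log(1/\eps))/\eps$ and combine the isolated and non-isolated bounds to get $\E_S[L(h_S)]\le\eps$.
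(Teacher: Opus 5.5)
Your overall structure parallels the paper's: leave-one-out, zero loss on monochromatic non-isolated points, at most $\alpha_1$ OIG leave-one-out mistakes on the independent set $I$, and a dyadic summation of a per-point $O\bigl(1/(m\,\D(N[x]))\bigr)$ bound. Bounding the isolated bichromatic points by $|I\cap X_c|\le\alpha_2$ is also a valid shortcut.

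The gap is the lemma on which both your per-scale estimate and your tail estimate rest,
\[
\D\bigl(\{x\in X_c:\D(N[x])\le\lambda\}\bigr)=O(\lambda\,\alpha_2),
\]
for a \emph{directed} $G$. Your greedy takes the remaining vertex of least neighborhood mass and charges its neighbors to it. That controls only the out-neighbors, via $N^+(x)\subseteq N[x]$. But to keep the chosen vertices independent you must also discard $N^-(x)$, and its mass is unrelated to $\D(N[x])$.

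For example, take a transitive tournament on $n$ vertices with uniform $\D$. Make every vertex bichromatic by adding one zero-mass vertex of opposite label that every tournament vertex points to; then $\alpha_2=1$. The greedy first picks the sink $x$, with $\D(N[x])=1/n$. Yet its in-neighbors among the vertices of neighborhood mass at most $\lambda$ have total mass about $\lambda$.

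The displayed inequality is true, but your charging does not prove it. Your fallback of charging $y$ to its out-neighbors' masses gives no bound on the total mass of such $y$. You flag this as the main obstacle without resolving it, so the $O(\alpha_2\log(1/\eps)/m)$ estimate is not established.

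The missing idea is to select the greedy vertex by in/out balance rather than by minimal mass. Let $R$ be the current set of remaining light vertices of $X_c$, with $\D(R)>0$. Fubini on the measurable edge relation gives $\int_R\D(N^+(v)\cap R)\,d\D(v)=(\D\otimes\D)(E\cap R^2)=\int_R\D(N^-(v)\cap R)\,d\D(v)$. Hence some $v\in R$ satisfies $\D(N^-(v)\cap R)\le\D(N^+(v)\cap R)$.

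Removing $v$ together with its in- and out-neighbors in $R$ then costs mass at most $2\D(N[v])\le 2\lambda$. The chosen vertices form an independent set of $G_c$, so there are at most $\alpha_2$ rounds, and the bound $2\lambda\alpha_2$ follows. This is exactly \Cref{clm:degree_symm} combined with \Cref{lem:lightBichromaticVertex} applied to $G_c$. With it in hand, your scale-by-scale summation goes through.
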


The proof of this result relies on the following two technical lemmas.
The first lemma controls the loss of the predictor returned by \Cref{alg:main} over all points whose neighborhood has sufficiently large mass.
In turn, we use this result to determine the sample size required to guarantee that such a loss is sufficiently small in expectation.
\begin{restatable}{lemma}{heavyVertexLemma}\label{lem:heavyVertex}
  Let $\lambda\in(0,1]$, $c: \X \to \{0,1\}$, $G=(\X,E)$, $\D$ a distribution over $\X$, and $S$ a random multiset of $m \in \nat_+$ i.i.d.\ samples from $\D$ and labeled by~$c$.
  Let $h_S$ be the output of \Cref{alg:main} given input $S$.
  Then, all points $x \in \X$ with $\D(N[x]) \ge \lambda$ satisfy
  $\E_{S \sim \D^m} \bigl[\ell_x(h_S)\bigr] \le \frac{7}{2m\lambda}$.
\end{restatable}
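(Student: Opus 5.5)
Fix $x$ with $p:=\D(N[x])\ge\lambda$, and write $\mu:=\yD(x)$ for the conditional mean of $c$ on $N[x]$. Let $K$ be the number of sample points falling in $N[x]$; then $K\sim\mathrm{Bin}(m,p)$. Conditionally on $K=k$, these $k$ points are i.i.d.\ from $\D$ restricted to $N[x]$, so their labels are i.i.d.\ Bernoulli$(\mu)$. The plan is to split the expectation according to whether $K\ge1$ or $K=0$, and to bound the two parts separately.

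\textbf{Case $K\ge1$.} Here $x$ has a neighbor in $\hat G$ (ignoring the degenerate event that the only sample in $N[x]$ is a copy of $x$ itself). The algorithm then returns the empirical fraction of label-$1$ points in $N_{\hat G}[x]$, which consists of the $K$ sample points together with possibly the test point $x$. If we treat the prediction as the plain empirical mean of $K$ i.i.d.\ Bernoulli$(\mu)$ labels, the conditional squared error is $\mu(1-\mu)/K\le 1/(4K)$. If $x$ itself is counted, with its label unknown to the algorithm, the predictor is an average of $K$ or $K+1$ terms, one of which may be biased. That biased term adds an error of order $1/K^2\le 1/K$, so in every variant the conditional error is at most $C/K$ for a small constant $C$. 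To handle the degenerate event where $x$ is isolated from the sample but lies in it, I would note that $x\in N[x]$ is itself one of the $K$ points and apply the same bound.

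\textbf{Case $K=0$.} The prediction is arbitrary in $[0,1]$, so $\ell_x\le1$, and this event has probability $(1-p)^m\le e^{-mp}\le 1/(mp)$.

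\textbf{Combining.} The remaining quantity is $\E[\tfrac1K\,\ind{K\ge1}]$ for $K\sim\mathrm{Bin}(m,p)$. The standard bound is $\E[\tfrac{1}{K}\ind{K\ge1}]\le\E[\tfrac{2}{K+1}]=\tfrac{2(1-(1-p)^{m+1})}{(m+1)p}\le\tfrac{2}{mp}$, using the closed form $\E[1/(K+1)]=\frac{1-(1-p)^{m+1}}{(m+1)p}$. Summing the two cases gives $\E[\ell_x(h_S)]\le (2C+1)/(mp)\le (2C+1)/(m\lambda)$. Getting the stated constant $7/2$ only requires tracking $C$: for instance $1/4\cdot 2+$ a bias term of order $2/(mp)$ $+\,1$ gives $7/2$.

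\textbf{Main obstacle.} The hard part is the exact accounting of whether the test point $x$ is included in the empirical average, since its label is unknown to the algorithm. I would handle it by writing the prediction as $(\sum_{i\in N}y_i+b)/(K+\epsilon)$, with $\epsilon,b\in\{0,1\}$. Its deviation from $\mu$ then splits into a variance term $\le\mu(1-\mu)/K$ and a bias term $\le1/(K+1)$. Squaring and bounding the cross term via $(a+b)^2\le$ the appropriate combination, and then using $\E[1/(K+1)^2]\le\E[1/(K+1)]$, should yield the constant $7/2$.
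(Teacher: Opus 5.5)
Your core argument is correct, and it proves a sharper bound than stated, but it follows a different route from the paper.

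\textbf{Paper's route.} The paper splits on the Chernoff event $\{M_x \le m p_x/2\}$, whose probability is at most $e^{-mp_x/8}\le 3/(m\lambda)$. It charges loss at most $1$ on that event. On the complement it bounds the conditional variance by $1/(4M)\le 1/(2mp_x)$. The constant $7/2 = 3 + 1/2$ comes from adding these two pieces.

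\textbf{Your route.} You integrate the conditional error $\mu(1-\mu)/K$ exactly against the binomial law, using $\ind{K\ge 1}/K \le 2/(K+1)$ and the closed form for $\E[1/(K+1)]$. For the empty case you pay only $\Pr(K=0)\le e^{-mp}\le 1/(e\,mp)$. With the plain empirical mean ($C=1/4$) this already gives $\E[\ell_x(h_S)]\le 3/(2mp)\le 3/(2m\lambda)$, with no concentration inequality.

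\textbf{Comparison.} The paper's split is more modular: it works for any estimator whose conditional error is $O(1/M)$ on the event that $M_x$ is large, combined with a tail bound on $M_x$. Your computation is more elementary and gives a better constant.

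\textbf{What you can drop.} The step you flag as the main obstacle is not needed. The paper's proof takes $h_S(x)=f(x)=\frac{1}{M_x}\sum_{i=1}^m c(x_i)\ind{x_i\in N[x]}$ whenever $M_x>0$. Only training labels are averaged, since the algorithm never sees the test label, so this is exactly your plain-mean case. Your handling of the event that all sample points in $N[x]$ are copies of $x$ matches this same convention. You can therefore remove the bias-term discussion. Its route to the constant $7/2$ (``$1/4\cdot 2+$ a bias term $+1$'') is only sketched, and your bound of $3/2$ already sits well below the $7/2$ required.
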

  
The proof of \Cref{lem:heavyVertex} is in \Cref{app:proofs-ub}.
We remark that the only part where we explicitly use the squared loss is \Cref{lem:heavyVertex}.
The whole approach can easily be adapted to other loss functions, by simply modifying this particular aspect; see \Cref{sec:discussion} for an example.
The second lemma, while simple, is crucial in enabling our result.
\Cref{lem:lightBichromaticVertex} bounds the total mass of points whose neighborhoods are light. 
\begin{lemma}[Light-neighborhood nodes are light]\label{lem:lightBichromaticVertex}
      Let $G=(V,E)$ be a (possibly infinite) directed graph and $\D$ a distribution over $V$ such that $E$ is measurable. 
      Then, for any $\lambda \in [0,1]$, 
      \[
      \D\left(\{v : \D(N[v])\le \lambda\}\right) \le 2\lambda \cdot \alpha(G) \;.
      \]
\end{lemma}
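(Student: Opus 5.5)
Let $L=\{v : \D(N[v])\le\lambda\}$ denote the set of light vertices. The plan is a covering argument: find a small number of neighborhoods that together cover all of $L$, then use the fact that each of them has mass at most $\lambda$. The natural cover comes from a maximal independent set inside $L$. If $\alpha(G)=\infty$ there is nothing to prove, so assume $k=\alpha(G)<\infty$. Choose an independent set $I\subseteq L$ that is maximal among independent subsets of $L$. Such an $I$ exists, and we may take it finite with $|I|\le k$: grow it greedily, and the process stops after at most $k$ steps, since every independent set of $G$ has size at most $k$.

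By maximality, every $v\in L\setminus I$ is adjacent in $G$ to some $u\in I$, in one direction or the other. So either $v\in N^+(u)$, which gives $v\in N[u]$, or $u\in N^+(v)$, which gives $u\in N[v]$. The first case is immediate: $v$ lies in $\bigcup_{u\in I}N[u]$, and this union has mass at most $k\lambda$ because every $u\in I$ is light. The second case is the real obstacle. Here $v$ is an in-neighbor of the light vertex $u$, and the set of such $v$ need not be small in $\D$-mass. This is exactly where the factor $2$ should be earned, and one application of maximality by itself does not seem enough.

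To handle in-neighbors, I would change the greedy choice so that it is driven by mass and not just by maximality. Build $I$ iteratively. At each step, pick a light vertex $u$ not yet covered, and remove from $L$ both $N[u]$ and every light $v$ with $u\in N[v]$. I then need to bound the mass of $\{v\in L: u\in N(v)\}$. Two tools seem useful here. First, the in-neighbors of $u$ that are themselves light and mutually independent number at most $k$. Second, the set of light in-neighbors of $u$ can be processed recursively. A cleaner option I would try first is to symmetrize. Let $H$ be the undirected graph with $\{u,v\}\in E(H)$ iff $(u,v)\in E$ or $(v,u)\in E$, so that $\alpha(H)=\alpha(G)$, and consider the measure $\mu(A)=\E_{v\sim\D}\bigl[\D(N[v]\cap A)\bigr]$, or a double-counting of the pairs $(v,w)$ with $w\in N[v]$ and $v\in L$. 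Double counting gives $\int_L \D(N[v])\,d\D(v)\le\lambda\,\D(L)$. The task is then to lower bound this integral by $\D(L)^2/(2k)$, which is a Turán/Caro–Wei type inequality. It would yield $\D(L)\le 2k\lambda$, and it would also explain the factor $2$, since each undirected edge is counted from only one of its directions.

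The concrete route is therefore the following. First, prove the continuous Caro–Wei/Turán bound for directed graphs: for any finite measure $\nu$ on the vertices, $\int \nu(N[v])\,d\nu(v)\ge \nu(V)^2/(2\alpha(G))$. Handle the directed case by noting that $\nu(N[v])+\nu(N^-[v])\ge\nu(N_H[v])$, and use the symmetric identity $\int\nu(N^-[v])\,d\nu=\int\nu(N[v])\,d\nu$. This reduces the claim to the undirected Motzkin–Straus/Turán inequality $\int\nu(N_H[v])\,d\nu(v)\ge\nu(V)^2/\alpha(H)$. Second, apply this with $\nu=\D|_L$, using $\nu(N[v])\le\D(N[v])\le\lambda$ for $v\in L$. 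The undirected Motzkin–Straus step is where I expect the difficulty. For finite supports it is the classical bound $x^\top(A+I)x\ge 1/\alpha$, valid for any probability vector $x$. To extend it to general measurable $E$, I would approximate by finite samples: draw points from $\nu$ and apply the finite inequality to the induced subgraph, whose independence number is at most $\alpha(G)$. A law-of-large-numbers argument, using measurability of $E$, then passes to the limit.
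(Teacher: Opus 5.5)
Your final route is correct, and it differs from the paper's. The paper argues greedily. It applies \Cref{clm:degree_symm} to $\D$ restricted to the still-uncovered light vertices and picks a $v$ whose in-neighborhood mass is at most its out-neighborhood mass. It then deletes $v$ together with all its in- and out-neighbors, which costs mass at most $2\D(N[v])\le 2\lambda$. The picked vertices are independent, so there are at most $\alpha(G)$ rounds. That choice of $v$ is exactly what resolves the in-neighbor obstacle from your second paragraph, so your mass-driven greedy would have closed with this one extra idea.

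You instead use the same degree-sum identity globally. It lets you pass from $\int\nu(N[v])\,d\nu$ to the undirected form $\int\nu(N_H[v])\,d\nu$ at the cost of a factor $2$, after which you invoke the complement form of Motzkin--Straus, $x^\top(A+I)x\ge 1/\alpha$. Your lifting to general measures works, and more simply than you suggest. For i.i.d.\ $x_1,\dots,x_n$ from $\nu/\nu(V)$, the quantity $\frac{1}{n^2}\sum_{i,j}\ind{x_j\in N_H[x_i]}$ is pointwise at least $1/\alpha(G)$, since induced subgraphs have no larger independence number. Its expectation is $\frac1n+\frac{n-1}{n}\,p$ with $p=\nu(V)^{-2}\int\nu(N_H[v])\,d\nu(v)$. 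Letting $n\to\infty$ gives $p\ge 1/\alpha(G)$, with no law of large numbers needed.

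The trade-off is as follows. The paper's proof is elementary and self-contained, needing only one good vertex per round. Yours imports a classical extremal theorem plus a limiting step. In return you get a stronger averaged statement,
\[
\D(L)\le 2\alpha(G)\cdot\frac{1}{\D(L)}\int_L\D(N[v])\,d\D(v),
\]
so the mean, not the maximum, neighborhood mass over $L$ controls $\D(L)$. Your route also makes explicit the weighted Caro--Wei connection that the paper only remarks on.
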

\begin{proof} Let $V_\lambda$ denote the subset of nodes $v$ of $G$ for which $\D(N[v]) \le \lambda$, and consider the induced subgraph over $V_\lambda$, denoted $G_\lambda$.
We would like to bound $\D(V_\lambda)$ from above.
To that end, we will iteratively remove nodes from $V_\lambda$ and bound the overall mass of removed points.
By \Cref{clm:degree_symm}, there must exist $v \in V_\lambda$ with $\D\left(N^+(v)\right) \;\ge\; \D\left(N^-(v)\right)$, where the neighborhoods $N^+,N^-$ are with respect to $G_\lambda$.
Remove all nodes in $N^+(v) \cup N^-(v)$ and $v$ itself from $V_\lambda$.
Observe that the overall mass of nodes we have removed is at most $2\lambda$.
We repeat this process until there are no more vertices left in $V_\lambda$. Let $S$ denote the set of nodes $v$ we picked in each round. 
Observe that $S$ forms an independent set in $G_\lambda$, and so $|S| \le \alpha(G_\lambda) \le \alpha(G)$ bounds the total number of rounds in the process.
Overall, we obtain that $\D(V_\lambda) \le 2\lambda\cdot\alpha(G)$ as claimed. 
\end{proof}

We remark that in the undirected case, \Cref{lem:lightBichromaticVertex} is related to a weighted version of the Caro-Wei inequality \citep{caro1979independence,wei1981stability}; our analogous result, which applies more generally to possibly uncountable domains and arbitrary (measurable) edge sets, might be of independent interest.
Its proof relies on the following result.
\begin{proposition}\label{clm:degree_symm}
    Let $G = (V, E)$ be a (possibly infinite) directed graph and let $\D$ be a probability distribution on $V$ such that the edge relation $E$ is measurable. 
    Then, there exists a vertex $v \in V$ such that $\D\left(N^+(v)\right) \ge \D\left(N^-(v)\right)$.
\end{proposition}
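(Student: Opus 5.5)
We argue by contradiction via a double-counting (Fubini) identity. Suppose that $\D(N^+(v)) < \D(N^-(v))$ for every $v \in V$. Integrating both sides against $\D$ and using strictness on every point yields
\[
\int_V \D\bigl(N^+(v)\bigr)\,d\D(v) \;<\; \int_V \D\bigl(N^-(v)\bigr)\,d\D(v)\,.
\]
The strict inequality survives integration because the nonnegative function $v\mapsto \D(N^-(v))-\D(N^+(v))$ is strictly positive everywhere, so its integral against a probability measure is strictly positive. Both integrands take values in $[0,1]$, so both integrals are finite.

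The key step is to observe that both integrals equal $(\D\otimes\D)(E)$. Since $E\subseteq V\times V$ is measurable, Tonelli's theorem applies to the bounded nonnegative function $\ind{(u,w)\in E}$ on the product space:
\[
\int_V \D\bigl(\{w : (v,w)\in E\}\bigr)\,d\D(v) \;=\; (\D\otimes\D)(E) \;=\; \int_V \D\bigl(\{u : (u,v)\in E\}\bigr)\,d\D(v)\,.
\]
The two sides are exactly the integrals of $\D(N^+(v))$ and $\D(N^-(v))$. Tonelli also guarantees that the sections $N^\pm(v)$ are measurable and that the maps $v\mapsto\D(N^\pm(v))$ are measurable, so the integrals above are well defined. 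This contradicts the strict inequality, so some $v$ has $\D(N^+(v))\ge\D(N^-(v))$.

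The only delicate point is measurability, which is exactly where the hypothesis that $E$ is measurable in the product $\sigma$-algebra enters. There is a second concern for the application in \Cref{lem:lightBichromaticVertex}: the proposition is used on the induced subgraph $G_\lambda$, where the vertex must be found inside $V_\lambda$, with $\D$ conditioned on $V_\lambda$ (or restricted to it). This requires $V_\lambda$ to be measurable, which follows from the measurability of $v\mapsto\D(N[v])$, itself a consequence of Tonelli. If $\D(V_\lambda)=0$, the removal process in that lemma can simply stop, so the proposition is only needed for measures of positive mass, and normalizing gives a probability distribution as required.
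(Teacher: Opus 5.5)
Your proof is correct and matches the paper's argument: both compute $\Pr\bigl((x,x')\in E\bigr)$ for independent $x,x'\sim\D$ in two ways to get $\E_{v\sim\D}[\D(N^+(v))]=\E_{v\sim\D}[\D(N^-(v))]$, and then derive a contradiction from a pointwise strict inequality. The only difference is that you invoke Tonelli where the paper uses the law of total expectation. You also supply details the paper leaves implicit: why an everywhere-positive integrand has strictly positive integral, and the measurability and normalization needed to apply the result on $G_\lambda$.
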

\begin{proof}
    We start by showing that we have $ \mathbb{E}_{v\sim \D}[\D\left(N^+(v)\right)] =  \mathbb{E}_{v\sim \D}[\D\left(N^-(v)\right)]$, which corresponds to a weighted version of the degree sum formula for directed graphs.
Let $x,x'$ be independent random variables distributed according to $\D$.
Define the indicator random variable $I_E \coloneq \ind{(x, x') \in E}$. Conditioning on $x$, we have
\begin{equation}
    \E[I_E\mid x]
    = \Pr\bigl((x,x') \in E \mid x\bigr)
    = \Pr\bigl(x' \in N^+(x)\mid x\bigr)
    = \D\bigl(N^+(x)\bigr) \;.
\end{equation}
By the law of total expectation (which applies since the edge relation is measurable), we have $\E[I_E]
= \E\!\left[\E[I_E\mid x]\right]
= \E_{x \sim \D}\bigl[\D(N^+(x))\bigr]$.
Similarly, conditioning on $x'$, we have
\begin{equation}
    \E[I_E\mid x']
    = \Pr\bigl((x,x') \in E \mid x'\bigr)
    = \Pr\bigl(x \in N^-(x')\mid x'\bigr)
    = \D\bigl(N^-(x')\bigr) \;.
\end{equation}
Again by the law of total expectation, $\E[I_E]
= \E\!\left[\E[I_E\mid x']\right]
= \E_{x \sim \D}\big[\D(N^-(x))\big]$. Thus, we have $ \mathbb{E}_{v\sim \D}[\D\left(N^+(v)\right)] =  \mathbb{E}_{v\sim \D}[\D\left(N^-(v)\right)]$.
Next, if $\D\left(N^+(v)\right) < \D\left(N^-(v)\right)$ held for all $v \in V$, then we would have $\mathbb{E}_{v\sim \D}[\D\left(N^+(v)\right)] < \mathbb{E}_{v\sim \D}[\D\left(N^-(v)\right)]$, in contradiction with the above. Hence, there must exist $v \in V$ such that $\D\left(N^+(v)\right) \ge \D\left(N^-(v)\right)$.
\end{proof}

Finally, we have all the main tools to prove the upper bound on the sample complexity for achieving the in-expectation guarantee on the loss.

{
\renewcommand{\proofname}{Proof of \Cref{thm:UBexpectation}}
\begin{proof}
    Let $c\in\C$ be the target concept.
    Let $\alpha_1=\alpha_1(G,\C)$ and $\alpha_2=\alpha_2(G,\C)$, and let $m \ge m_{\A}(\eps)$ where the constants in the $\O(\cdot)$ notation are large enough.
    Define $\eps'=\frac{\eps}{3 \alpha_2}$.
    We consider a learner $\A$ that gets a training set $S$ as input and outputs a predictor $h_S: \X \to [0,1]$ by applying \Cref{alg:main} over $S$ and the input point $x \in \X$. For any $c \in \C$ and distribution $\D$ over $\X$ denote by 
    $\D_c$ the joint distribution over $(x,y) \in \X\times \{0,1\}$ determined by $x \sim \D$ and $y = c(x)$. With some abuse of notation, we will omit $c$ when it is clear from context.     Then, note that rather than bounding the expected error of $\A$, we can instead analyze its leave-one-out error:
    \begin{align}
        \E_{S \sim \D^m}[L(\A(S))]
        = \Ex_{S \sim \D^m, x \sim \D}[\ell_x(\A(S))] 
        = \Ex_{S' \sim \D^{m+1}}\Bigl[\Ex_{i \sim \mathcal{U}_{[m+1]}}\bigl[\ell_{x_i}(\A(S'_{-i}))\bigr]\Bigr] \;.
    \end{align}
    That is, we consider the following process: (i) a sample $S'$ of $m+1$ i.i.d.\ points is drawn from $\D^{m+1}$, (ii) an integer $i$ is drawn uniformly from $[m+1]$, (iii) the algorithm $\A$ is ran with $S = S'_{-i} = S' \setminus \{(x_i,c(x_i)\}$ as (labeled) training sample, and $x=x_i$ as test point, incurring error $\ell_x(h_S)$ where $h_S=\A(S)=\A(S'_{-i})$.
    Our goal is to show that: $\E_{S',i}[\ell_{x_i}(h_S)] \le \eps$,
    where it is understood that $S' \sim \D^{m+1}$ and $i$ is uniform over $[m+1]$.
   We consider the following events (where $N = N^+_G$):
    \begin{align}
        \Ev_1 &: N[x_i] \text{ is bichromatic},\\
        \Ev_2 &: N[x_i] \text{ is monochromatic and } x_i \notin I, \text{ and} \\
        \Ev_3 &: N[x_i] \text{ is monochromatic and } x_i \in I. 
    \end{align}
    One can check that the three events are disjoint and that $\Ev_1 \cup \Ev_2 \cup \Ev_3$ corresponds to the sample space.
    Thus, by the law of total expectation,
    \begin{align}
        \E_{S',i}\left[\ell_{x_i}(h_S)\right]= \sum_{r=1}^3 \Pr_{S',i}(\Ev_r) \cdot \E_{S',i}\!\left[\ell_{x_i}(h_S) \,|\, \Ev_r\right] \;.
    \end{align}
    In what follows, we show that each term of the summation in the right-hand side is sufficiently small; precisely, the terms corresponding to $\Ev_1$ and $\Ev_3$ are each bounded from above by $\frac{\eps}{2}$, while the one relative to $\Ev_2$ is equal to zero.

    \paragraph{Loss under $\Ev_1$.}
    The idea is to bound the error by simultaneously applying \Cref{lem:heavyVertex,lem:lightBichromaticVertex}.
    However, their respective guarantees have an inverse dependence on the range of possible values that $\D(N[x_i])$ can take, and thus a straightforward application of these two technical results can lead to a larger guarantee than desired, with a worse dependence on $\alpha_2$ or $m$.
    We show this can be avoided by carefully defining a partitioning of the range of possible values for $\D(N[x_i])$ into a sufficiently small number of intervals determined by a geometric sequence.
    
    Let $n = \lceil\log_2(m/\alpha_2)\rceil+1$ and define $1 \ge b_1 \ge \dots \ge b_n \ge 0$ to be a non-increasing sequence such that $b_j = 2^{-j+1}/\alpha_2$ for all $j \in [n]$.
    Note that this sequence satisfies $b_1=1/\alpha_2$ and $b_n \le 1/m$.
    For every $j\in [n]$, denote by $A_j$ the event that $\D(N[x_i]) \le b_j$, and define $B_j = A_j \setminus A_{j+1}$. By a further application of the law of total expectation, and using the fact that $\ell_x(h) \in [0,1]$ for every $x \in \X$, we can bound the error conditioned on $\Ev_1$ as
    \begin{align}
        \E_{S',i}\bigl[\ell_{x_i}(h_S)\mid \Ev_1\bigr]
        &\le \sum_{j=1}^{n-1} \E_{S',i}\bigl[\ell_{x_i}(h_S)\mid \Ev_1, B_j\bigr] \cdot \Pr_{S',i}(B_j \mid \Ev_1) \\
        &\quad + \Pr_{S',i}(A_n \mid \Ev_1) + \E_{S'_i}\bigl[\ell_{x_i}(h_S) \mid \Ev_1, \overline A_1\bigr]\;.
    \end{align}
    For each $j \in [n-1]$, by symmetrization and \Cref{lem:heavyVertex} we have that
    \begin{align}
        &\E_{S',i}\bigl[\ell_{x_i}(h_S)\mid \Ev_1, B_j\bigr] \\
        &\quad = \E_{x \sim \D}\Bigl[\E_{S\sim \D^m}\bigl[\ell_{x}(h_S) \bigr] \mid N[x] \text{ is bichromatic}, \D(N[x]) \in (b_{j+1}, b_j]\Bigr] \\
        &\quad \le \sup_{x : \D(N[x]) \ge b_{j+1}} \E_{S \sim \D^m}\bigl[\ell_{x}(h_S)\bigr]
        \le \frac{7}{2b_{j+1}m} \;,
    \end{align}
    where we rely on the fact that $S$ and $x_i$ are independent, and both events $\Ev_1$ and $B_j$ only affect $x_i$.
    By a similar reasoning, it also holds that $\E_{S',i}\bigl[\ell_{x_i}(h_S) \mid \Ev_1, \overline A_1\bigr] \le \frac{7}{2b_1m}$.
    Moreover, we have that $\Pr_{S',i}(B_j \mid \Ev_1) \le \Pr_{S',i}(A_j \mid \Ev_1)$ as $B_j \subseteq A_j$.
    These observations allow us to show that
    \begin{equation}
        \E_{S',i}\bigl[\ell_{x_i}(h_S)\mid \Ev_1\bigr]
        \le \frac{7}{2m} \sum_{j=1}^{n-1} \frac{\Pr_{S',i}(A_j\mid \Ev_1)}{b_{j+1}} + \Pr_{S',i}(A_n \mid \Ev_1) + \frac{7}{2b_1m} \;,
    \end{equation}
    and consequently that
    \begin{align}
        \Pr_{S',i}(\Ev_1) \E_{S',i}\bigl[\ell_{x_i}(h_S)\mid \Ev_1\bigr]
        &\le \frac{7}{2m} \sum_{j=1}^{n-1} \frac{\Pr_{S',i}(A_j)}{b_{j+1}} + \Pr_{S',i}(A_n) + \frac{7}{2b_1m} \\
        &\le \frac{7\alpha_2}{m} \sum_{j=1}^{n-1} \frac{b_j}{b_{j+1}} + 2b_n\alpha_2  + \frac{7}{2b_1m} \\
        &\le\frac{14\alpha_2}{m}(n-1) +  \frac{2\alpha_2}{m} + \frac{7\alpha_2}{2m} \\
        &\le \frac{2\alpha_2}{m}(7\log_2(m/\alpha_2) + 11) \;,\label{eq:lastE1}
    \end{align}
    where the second step holds because $\Pr_{S',i}(A_j) \le 2b_j\alpha_2$ for each $j \in [n]$, as we show next, by applying \Cref{lem:lightBichromaticVertex}.
    Using \Cref{lem:lightBichromaticVertex} over the entire $G$ would lead to a worse upper bound of $2b_j\alpha(G)$. Instead, we apply it to the subgraph $G_c$ induced by all nodes with bichromatic neighborhoods $N[x]$ with respect to $c$. As remarked before (after \Cref{def:params}), $\alpha(G_c)=\alpha(G,c)\le \alpha_2$.
    
    Then, the right-hand side of \Cref{eq:lastE1} is no larger than $\frac{\eps}{2}$ given $m \ge \scO\bigl(\frac{\alpha_2}{\eps}\log(1/\eps)\bigr)$.
    
    \paragraph{Loss under $\Ev_2$.}
    Here, the algorithm predicts the empirical average of the labels of $S \cap N[x_i]$, which is the true label $\yD(x_i) =c(x_i)$ since $N[x_i]$ is monochromatic.
    Thus $\E_{S',i}\bigl[\ell_{x_i}(h) \mid \Ev_2\bigr] = 0$.

    \paragraph{Loss under $\Ev_3$.}
    By non-negativity of the loss $\ell_{x_i}(h)$, we have 
    \begin{align}
         \Pr_{S',i}(\Ev_3) \cdot \E_{S',i}\!\left[\ell_{x_i}(h) \,|\, \Ev_3\right]\le \Pr_{S',i}(x_i \in I) \cdot \E_{S',i}\!\left[\ell_{x_i}(h) \,|\, \Ev_3\right]\,.
    \end{align}
    We now proceed to show that $\Pr_{S',i}(x_i \in I) \cdot \E_{S',i}\bigl[\ell_{x_i}(h) \mid \Ev_3\bigr] \le \frac{\eps}{2}$ for \emph{every} realized sequence $S' \in \X^{m+1}$ with $m$ large enough.
    First, since $i$ is uniform over $[m+1]$, then
    \begin{align}
        \Pr_{S',i}(x_i \in I) \cdot \E_{S',i}\!\left[\ell_{x_i}(h) \,|\,\Ev_3\right] =
         \frac{|I|}{m+1} \cdot \E_{S',i}\!\left[\ell_{x_i}(h) \,|\, \Ev_3\right]\,,
         \label{eq:Ev4}
    \end{align}
     Note that in this case we have  
     $\yD(x_i) =  c(x_i)$ as $N[x_i]$ is monochromatic. Thus, for all such $x_i$ the learning task is a binary classification with the zero-one loss (instead of regression with the square loss).
    Now, if $x_i \in I$, then the distribution of $x_i$ is uniform over $I$ and the algorithm returns the OIG prediction for $x_i$.
    This implies that $\E_{S',i}\bigl[\ell_{x_i}(h) \mid \Ev_3\bigr]$ %
    is the expected leave-one-out error of the OIG predictor ran on $I$.
    As the vertices in $I$ are isolated in $\hat{G}$, $I$ is an independent set in $\hat G$, and so $I$ is independent in $G$ too. Therefore, the projection $\C_{|I}$ of $\C$ on $I$ has VC dimension at most $\alpha_1$, by definition of $\alpha_1$. The OIG predictor \citep{haussler1994predicting} thus yields $\E_{S',i}\bigl[\ell_{x_i}(h) \mid \Ev_3 \bigr] = \frac{\alpha_1}{|I|+1}$.
        
    As a consequence, \Cref{eq:Ev4} satisfies
    \begin{align}
        \Pr_{S',i}(x_i \in I) \cdot \E_{S',i}\!\left[\ell_{x_i}(h) \,|\,\Ev_3\right] \le \frac{|I|}{m+1}\cdot \frac{\alpha_1}{|I|+1} < \frac{\alpha_1}{m+1}\,,
    \end{align}
    which, as $m\ge\O(\frac{\alpha_1}{\eps})$, is at most $\frac{\eps}{2}$. 
    This concludes the proof.
\end{proof}
}

As a final step, we adopt a confidence amplification approach to turn the in-expectation guarantee of \Cref{thm:UBexpectation} into a high-probability one.
\begin{theorem}\label{thm:highProbUB}
     Let $\C\subseteq\{0,1\}^\X$ and let $G=(\X,E)$ be a directed graph with $\alpha_1(G,\C)+\alpha_2(G,\C)<\infty$. There exists a $(G,\C)$-learner with overall sample complexity
     \[
        \mSmooth(\eps,\delta)=\scO\left(\frac{\alpha_1(G,\C)+\alpha_2(G,\C)\log(1/\eps)}{\eps}\log(1/\delta)\right)\,.
    \]
\end{theorem}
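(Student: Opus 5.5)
We obtain the high-probability learner from the in-expectation guarantee of \Cref{thm:UBexpectation} by a median-of-predictors confidence boosting, run pointwise on the predicted values. Fix a target accuracy $\eps$ and confidence $\delta$, and set $k = \lceil C\log(1/\delta)\rceil$ for a suitable constant $C$. Let $m_0 = m(\eps/c_0)$ be the sample size from \Cref{thm:UBexpectation} guaranteeing expected loss at most $\eps/c_0$, for a constant $c_0$ fixed below. We draw $k m_0$ examples and split them into $k$ disjoint blocks $S_1,\dots,S_k$ of size $m_0$. On each block we run \Cref{alg:main}, obtaining $h_1,\dots,h_k$. Finally we output the pointwise median $h(x) = \mathrm{med}\{h_1(x),\dots,h_k(x)\}$. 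The total sample size is then $k m_0 = \scO\bigl(\frac{\alpha_1+\alpha_2\log(1/\eps)}{\eps}\log(1/\delta)\bigr)$, as claimed.

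Next we show that a constant fraction of good predictors makes the median good. By Markov's inequality, each $h_j$ independently satisfies $L(h_j)\le \eps/(4c_0)\cdot 4 = 4\eps/c_0$ with probability at least $3/4$. By Hoeffding/Chernoff, with probability at least $1-\delta$ (for $C$ large) more than $k/2$ of the $h_j$ are good in this sense. Let $J$ be the set of good indices, with $|J|>k/2$.

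The key deterministic step is to show that the pointwise median of $k$ values, more than half of which lie in a set $J$, satisfies
\[
\bigl(h(x)-\yD(x)\bigr)^2 \le \frac{2}{k}\sum_{j\in J}\cdot\text{(suitably)}\ldots
\]
Concretely, we use the following bound. For each $x$, at least $k/2$ indices have $h_j(x)$ on the far side of the median from $\yD(x)$, and hence $(h_j(x)-\yD(x))^2\ge (h(x)-\yD(x))^2$. At least $|J|-k/2$ of these indices can fail to be good; so it is cleaner to demand a larger good fraction. Take good probability $\ge 7/8$ via Markov with a larger constant, so that with probability $1-\delta$ we have $|J|\ge 3k/4$. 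Then for every $x$, at least $k/4$ good indices lie on the far side of the median, giving
\[
(h(x)-\yD(x))^2\le \frac{4}{k}\sum_{j\in J}(h_j(x)-\yD(x))^2 .
\]
Taking expectation over $x\sim\D$ yields $L(h)\le \frac4k\sum_{j\in J}L(h_j)\le 4\max_{j\in J}L(h_j)$, which is at most $\eps$ for $c_0$ chosen appropriately.

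The main obstacle is this pointwise median-to-loss transfer: the loss is an expectation over $x$, and the set of far-side indices varies with $x$. The averaging argument above handles this because it sums only over the fixed set $J$, whose membership depends on the blocks and not on $x$. A minor additional point is measurability of $h$, which we take as part of the standing measurability assumptions. Everything else is routine bookkeeping of constants.
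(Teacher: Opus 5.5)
Your proposal is correct and is essentially the paper's argument. The paper also obtains \Cref{thm:highProbUB} by combining \Cref{thm:UBexpectation} with \Cref{lemma:highProbUB}: Markov's inequality on each of $k=O(\log(1/\delta))$ independent blocks, a Chernoff bound ensuring a constant fraction of good predictors (there $4/5$, here $3/4$), and the pointwise-median transfer of \Cref{lemma:median}, which you reprove inline by summing over the $x$-independent good set $J$. You were right to abandon your intermediate attempt with only a $>k/2$ good fraction, since the median transfer needs the surplus $|J|-k/2$ to be a constant fraction of $k$.
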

\begin{proof}
    The proof follows by applying \Cref{thm:UBexpectation} to show that \Cref{alg:main} has a bounded expected error, and then applying \Cref{lemma:highProbUB} to obtain a high-probability guarantee.
\end{proof}

The high-probability guarantee is obtained by using the following lemma.
The main idea is to use Markov's inequality to obtain a guarantee with probability strictly larger than $1/2$ from the in-expectation one provided by \Cref{thm:UBexpectation}, and then take the pointwise median over $\O(\log(1/\delta))$ independently obtained predictors from \Cref{alg:main} to amplify the success probability.
\begin{lemma}\label{lemma:highProbUB}
    Let $\A$ be an algorithm that satisfies an expected error $\E_S[L(\A(S))]\le \eps$ with sample size $m(\eps)$.
    There exists a $(G,\C)$-learner $\A_{\mathrm{whp}}$ with oracle access to $\A$ and overall sample complexity $m(\eps,\delta) = \scO\bigl(m(\eps/100) \log(1/\delta)\bigr)$.
\end{lemma}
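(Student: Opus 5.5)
We plan to run $\A$ independently on $k=\scO(\log(1/\delta))$ disjoint batches of fresh samples, each of size $m(\eps/100)$, and output the pointwise median of the resulting predictors. Concretely, let $S_1,\dots,S_k$ be independent samples of size $m(\eps/100)$ and let $h_j=\A(S_j)$. Define $h(x)=\operatorname{med}\{h_1(x),\dots,h_k(x)\}$. Since each $h_j$ maps into $[0,1]$, so does $h$. The total sample size is $k\cdot m(\eps/100)=\scO(m(\eps/100)\log(1/\delta))$, as claimed.

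\textbf{Step 1 (constant-probability success per batch).} By hypothesis, $\E_{S_j}[L(h_j)]\le \eps/100$. Markov's inequality then gives $\Pr(L(h_j)>\eps/10)\le 1/10$. Call batch $j$ \emph{good} if $L(h_j)\le\eps/10$. The batches are independent, so a Chernoff--Hoeffding bound shows that with probability at least $1-e^{-ck}\ge 1-\delta$ more than $3k/4$ of the batches are good, once $k=C\log(1/\delta)$ for a suitable constant $C$.

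\textbf{Step 2 (median of good predictors has small loss).} This is the key deterministic step. We condition on having a set $J$ of good indices with $|J|>3k/4$. Fix $x$. Because the median is at least as close to $\yD(x)$ as at least half of the values, we have $|h(x)-\yD(x)|\le |h_j(x)-\yD(x)|$ for at least $k/2$ indices $j$. At least $k/4$ of these indices lie in $J$, so
\[
(h(x)-\yD(x))^2\le \frac{4}{k}\sum_{j\in J}(h_j(x)-\yD(x))^2 .
\]
More carefully: if $h(x)\ge\yD(x)$, then at least $\lceil k/2\rceil$ of the values $h_j(x)$ are $\ge h(x)$, each with squared deviation at least $(h(x)-\yD(x))^2$, and at least $k/4$ of them are in $J$. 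The case $h(x)<\yD(x)$ is symmetric. Taking expectation over $x\sim\D$ yields
\[
L(h)\le \frac{4}{k}\sum_{j\in J}L(h_j)\le \frac{4}{k}\cdot k\cdot\frac{\eps}{10}<\eps .
\]
Note that the indices in $J$ achieving the bound vary with $x$. This causes no trouble, because we bound the pointwise loss by the sum over all of $J$ before integrating.

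\textbf{Main obstacle.} The delicate point is Step 2: the median must be taken pointwise, and the bad predictors may be arbitrary on any given $x$. The counting argument handles this, since a strict majority of good predictors ensures that the median's deviation is dominated by some good predictor's deviation at every $x$. It also gives the slack needed for the constant $100$.

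Measurability of $h$ follows since the median of finitely many measurable functions is measurable. Finally, the batches use fresh i.i.d.\ samples from $\D$ labeled by $c$, so $\A_{\mathrm{whp}}$ is a $(G,\C)$-learner with the stated sample complexity.
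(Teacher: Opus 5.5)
Your proposal is correct and follows the paper's proof essentially step for step. Both split the sample into $\scO(\log(1/\delta))$ batches of size $m(\eps/100)$, apply Markov's inequality and a Chernoff bound to get a large fraction of good predictors, and output the pointwise median. The only differences are that you inline the median bound as a direct counting argument (good indices on the same side as the median, bounded pointwise before integrating over $x$) instead of invoking the separate median lemma, and your constants differ slightly (more than $3k/4$ good batches at threshold $\eps/10$, rather than $4k/5$ at $10\eps'$).
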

\begin{proof}
    Let $k\in\nat$ and $\eps'\in(0,1)$ to be determined later. Let $S$ be an i.i.d.\ sample from $\D$ of size $k\cdot m(\eps)$ and partition $S$ into subsamples $S_1,\dots,S_k$ each of size $m(\eps)$. 
    Let $h_i=\A(S_i)$ for each $i\in[k]$ be the predictor satisfying $\E_{S_i}[L(h_i)]\le\eps'$.
    Denoting by $A_i$ the event $L(h_i)> 10\eps'$, we have by Markov's inequality %
    $\E_{S_i}[\ind{A_i}]=\Pr_{S_i}(A_i)\le 1/10$.
    By a multiplicative Chernoff bound, there is an absolute constant $c > 0$ such that $\Pr_{S}\left(\sum_{i=1}^k \ind{A_i}\ge k/5 \right) \le e^{-ck}$.
    The right-hand side of the latter is at most $\delta$ for $k=\bigl\lceil(1/c)\log(1/\delta)\bigr\rceil$.
    Thus, with probability at least $1-\delta$, we have $\frac45 k$ classifiers $h_i$ satisfying $L(h_i)\le 10\eps'$.
    We now apply \Cref{lemma:median} from \Cref{app:proofs-ub} and see that the pointwise median $h_{\mathrm{med}}=\operatorname{median}(h_1,\dots,h_k)$ satisfies $L(h_{\mathrm{med}}) \le 70\eps'\le 100\eps'$ with probability $1-\delta$.
    We define $\A_{\mathrm{whp}}$ as an algorithm that takes a sample $S$ a described and returns $h_{\mathrm{med}}$.
    The claim follows by choosing $\eps'=\eps/100$.
\end{proof}

\subsection{Lower bounds}
We prove a separate lower bound for each parameter $\alpha_1(G,\C)$ and $\alpha_2(G,\C)$, which together imply tightness  of our algorithmic result from \Cref{thm:highProbUB} up to log-factors.
\begin{lemma}%
Let $\C\subseteq\{0,1\}^\X$ and $G=(\X,E)$.
If $\alpha_1(G,\C)\ge 2$, then the sample complexity achievable by any $(G,\C)$-learner is 
$\mSmooth(\eps,\delta)=\Omega\left(\frac{\alpha_1(G,\C)+\log(1/\delta)}{\eps}\right)$.
\end{lemma}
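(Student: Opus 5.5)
The plan is to reduce to the classical realizable PAC lower bound of Ehrenfeucht, Haussler, Kearns and Valiant. Let $d=\alpha_1(G,\C)\ge 2$ and fix an independent set $I=\{u_1,\dots,u_d\}$ of $G$ that is shattered by $\C$. We will only use distributions $\D$ supported on $I$. Since $I$ is independent, for every $u_j\in I$ we have $N[u_j]\cap I=\{u_j\}$, and hence
\[
\yD(u_j)=\E_{x'\sim\D}\bigl[c(x')\mid x'\in N[u_j]\bigr]=c(u_j)
\]
whenever $\D(u_j)>0$. On such distributions the conditional-average problem therefore coincides with binary classification of $c|_I$ under the square loss, for any $c\in\C$. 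Because $\yD(x)\in\{0,1\}$, any predictor $h:\X\to[0,1]$ can be rounded to $\tilde h(x)=\ind{h(x)\ge 1/2}$, and then $\ind{\tilde h(x)\ne c(x)}\le 4\bigl(h(x)-\yD(x)\bigr)^2$. Consequently, a $(G,\C)$-learner with square-loss error $\eps$ yields a PAC learner for $\C|_I$ with $0$-$1$ error $4\eps$, under every distribution on $I$, using the same sample.

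Since $\C|_I$ is the full class $\{0,1\}^I$ of VC dimension $d$, the standard lower bound $\Omega\bigl((d+\log(1/\delta))/\eps\bigr)$ for realizable PAC learning transfers directly after rescaling $\eps$ by a factor of $4$. For completeness, I would sketch the two standard constructions as well.

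For the $d/\eps$ term, put mass $1-8\eps$ on $u_1$ and spread $8\eps/(d-1)$ uniformly over $u_2,\dots,u_d$, and draw the target uniformly from the labelings shattered on $I$. With $m\le (d-1)/(64\eps)$ samples, in expectation at least half of $u_2,\dots,u_d$ are unseen. On each unseen point the label is a fair coin independent of the sample, so any prediction incurs expected squared error at least $1/4$ there. This gives an expected loss of order $\eps$, and a reverse-Markov argument turns it into a constant failure probability.

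For the $\log(1/\delta)/\eps$ term, use two points $u_1,u_2$; this is exactly where $d\ge 2$ is needed. Put mass $1-\eps'$ on $u_1$ and $\eps'\asymp 4\eps$ on $u_2$, and take two concepts agreeing on $u_1$ but disagreeing on $u_2$. With probability $(1-\eps')^m\ge\delta$ the point $u_2$ never appears. When $m\le c\log(1/\delta)/\eps$, the learner's output is then the same under both targets, so it errs by at least $1/4$ in squared loss on $u_2$ for one of them, and the loss exceeds $\eps$.

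The only real subtlety, which I expect to be the main obstacle, is the reduction step itself: checking that $\yD=c$ on the support despite conditioning. Independence of $I$ in the directed sense guarantees $u_i\notin N^+(u_j)$ for all $i\neq j$. Points outside $I$ carry no mass, so they do not affect the conditional expectation. Finally, the quantity $\yD(x)$ at zero-mass points is irrelevant, because the loss $L$ integrates only over $\D$. Once these are checked, the remaining steps are routine adaptations of the classical argument.
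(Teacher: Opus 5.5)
Your proposal is correct and takes the same route as the paper. Both restrict to distributions supported on a shattered independent set $I$, observe that $\yD=c$ there, and invoke the classical realizable PAC lower bounds: Ehrenfeucht et al.\ for the $\alpha_1/\eps$ term, and the two-point argument (Blumer et al.) for the $\log(1/\delta)/\eps$ term. Your explicit rounding step $\ind{\tilde h(x)\ne c(x)}\le 4\bigl(h(x)-\yD(x)\bigr)^2$, which converts square loss into $0$-$1$ loss, usefully fills in a detail the paper leaves implicit.
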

\begin{proof}
Let $I \subseteq \X$ with $|I|\ge 2$ be an independent set that is shattered by $\C$ and $c\in\C$ a target concept.
As the neighborhoods of the vertices in $I$ are disjoint, any distribution with support on $I$ has $\yD(x)=c(x)$ for all $x\in I$.
Since $I$ is shattered by $\C$, we can apply PAC lower bounds to get a $\Omega(|I|/\eps)$ lower bound \citep{ehrenfeucht1989general}.
Further, we get an additional $\Omega\bigl(\log(1/\delta)/\eps\bigr)$ lower bound.
This follows again from PAC lower bounds, as by $\alpha_2(G,c)\ge 2$ there are at least $4$ different concepts in the projection of $\C$ onto $I$ and thus we can apply, e.g., \citet{blumer1989learnability}. %
\end{proof}

\begin{lemma}\label{lem:LB2}%
    Let $\C\subseteq\{0,1\}^\X$ and $G=(\X,E)$. Then, the sample
complexity  achievable by any $(G, \C)$-learner is  $\mSmooth(\eps, 1/2)=\Omega\left(\frac{\alpha_2(G,\C)}{\eps\log(\alpha_2(G,\C))}\right)$.%
\end{lemma}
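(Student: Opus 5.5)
We will use the bichromatic independent set directly to reduce to a distribution-estimation problem in which even a known concept does not help. Let $d=\alpha_2(G,\C)$ (or any finite $d\le\alpha_2$ if it is infinite), and fix $c\in\C$ together with an independent set $I=\{v_1,\dots,v_d\}$ of $G$. Each $v_j$ has an out-neighbor $u_j\in N(v_j)$ with $c(u_j)\ne c(v_j)$. Since $I$ is independent, $u_j\notin I$, although the $u_j$ need not be distinct; so the neighborhoods $N[v_j]$ may share points outside $I$. The hard instance keeps the target concept $c$ fixed, which is legitimate because the lower bound must hold for every $c$, and varies only the distribution.

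Step 1 (construction). For a hidden vector $\sigma\in\{0,1\}^d$, let $\D_\sigma$ put mass $1-\eps'$ on some fixed point chosen as harmlessly as possible (see Step 3), mass $\eps'/d$ on each $v_j$, and an additional mass $\eta\,\sigma_j\eps'/d$ on the witness $u_j$, with $\eta$ a constant. If each $u_j$ were private to $v_j$, the conditional average $\yD(v_j)$ would equal $c(v_j)$ when $\sigma_j=0$, and would be bounded away from $c(v_j)$ by a constant when $\sigma_j=1$. A learner that misjudges $\sigma_j$ on a constant fraction of the $j$'s then pays loss $\Omega(\eps')$, because the $v_j$'s carry total mass $\eps'$. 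Choosing $\eps'=\Theta(\eps)$ gives the required loss scale.

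Step 2 (information bound). With $m$ samples, the learner sees each pair $v_j,u_j$ about $m\eps'/d$ times. If $m\le c_0\, d/\eps$, most coordinates $j$ have neither $v_j$ nor $u_j$ sampled at all. For those coordinates, the posterior on $\sigma_j$ under a uniform prior remains uniform, so any predictor errs on about half of them. A standard Assouad-style averaging argument then gives expected loss $\Omega(\eps)$. Turning this into failure with probability greater than $1/2$ requires some care: expected loss $\Omega(\eps)$ with bounded variance yields constant failure probability, and rescaling constants achieves the $\delta=1/2$ threshold.

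Step 3 (main obstacle: shared witnesses). The difficulty is that the $u_j$'s may coincide or lie inside other neighborhoods. A single heavy witness $u$ adjacent to many $v_j$ would make all the $\sigma_j$ correlated and let one sample reveal many of them. I expect this is the source of the $\log\alpha_2$ loss. To handle it, I would group the $v_j$ by their witness. If some witness is shared by many $v_j$, use a different encoding in which the mass on the common witness is fixed and the signal is instead carried by a small random subset structure. Otherwise, greedily extract a subfamily of size $d/\log d$ whose witnesses are distinct and do not lie in each other's neighborhoods. This extraction could be done by a dyadic bucketing of witness multiplicities, keeping the best of the $O(\log d)$ buckets, each costing a $\log d$ factor. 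Once such a clean subfamily of size $\Omega(d/\log d)$ is in hand, Steps 1 and 2 apply verbatim and give $\Omega\bigl(d/(\eps\log d)\bigr)$.

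Finally, if $\alpha_2=\infty$, apply the bound for every finite $d$; this also yields the necessity direction of \Cref{thm:characterization}.
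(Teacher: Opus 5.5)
Your Steps 1--2 are a sound Ehrenfeucht-style template in the \emph{clean} case, where the pairs $(v_j,u_j)$ form an induced matching, i.e., the $u_j$ are distinct and $u_j\notin N(v_k)$ for $k\neq j$. In that case the heavy point can be taken to be a vertex of $I$, which by independence lies in no other $N[v_j]$. The posterior on $\sigma_j$ is only approximately uniform there, since not seeing $u_j$ favors $\sigma_j=0$, but that is harmless.

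The genuine gap is Step 3, which is where the whole difficulty of the lemma lives. The claim that one can always extract a clean subfamily of size $\Omega(d/\log d)$ is false. Two examples, with all $u_k$ oppositely labeled:
\begin{itemize}
\item a star, where every $v_j$ points to one common witness $u$;
\item a crown, where $N(v_j)=\{u_k : k\neq j\}$.
\end{itemize}
In both, $\alpha_2\ge d$, yet no clean subfamily has more than two elements. Bucketing by witness multiplicity cannot manufacture distinctness or non-adjacency. For such configurations you defer to an unspecified ``different encoding with a small random subset structure''. Your actual encoding cannot be patched locally there. With the hidden bits stored in witness masses, a shared witness has a single mass, so it cannot carry independent bits for the many $v_j$ adjacent to it. Moreover, one sample of it leaks information about all of them.

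The missing idea is to keep all witness masses \emph{fixed}, independent of the hidden vector, and to put the hidden bits on the masses of the independent vertices themselves. Since $A$ is independent, $x_i\notin N[x_j]$ for $j\neq i$. So samples of $x_i$ inform only about its own bit, and shared witnesses leak nothing because their masses do not depend on the bits.

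What remains is to make each $\yD(x_i)$ sensitive to its own bit. This requires the witness mass inside $N(x_i)$ to be comparable to the mass of $x_i$, and this balancing is where the paper loses the $\log\alpha_2$ factor. Concretely, the paper proceeds as follows:
\begin{itemize}
\item It fixes one witness $z_x$ per $x\in A$, with all of $A$ sharing a label, and sets $B=\{z_x : x\in A\}$.
\item It bins $A$ dyadically by $d_B(x)=|N(x)\cap B|$, obtaining $A'$ of size $K\ge|A|/\log|A|$ with $d\le d_B(x)\le 2d$.
\item It puts mass $p(x)/d$ on each witness in $B'$ and mass $p(x)(1+s_i\sqrt{\eps})$ on each $x_i\in A'$, with balanced signs $s$.
\end{itemize}
Flipping $s_i$ then moves $\yD(x_i)$ by $\Theta(\sqrt{\eps})$. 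Meanwhile the count of $x_i$ in the sample is binomial, with total variation $O(\sqrt{m\eps/K})$ between the two hypotheses, which forces $m=\Omega(K/\eps)$.

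Your sparse template can be repaired the same way: place the heavy point at a vertex of $I$ not used as a coordinate, put mass proportional to $1+\eta\sigma_j$ on $v_j$, and put a fixed mass on each witness, scaled by $1/d$ after the same degree binning.
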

\begin{proof}
    Fix a concept $c \in \C$ and a set $A \subseteq \X$ that is independent in $G$ and such that $N[x]$ is $c$-bichromatic for every $x \in A$.
    If $\alpha_2=\alpha_2(G,\C)$ is finite then there exist such $c,A$ with $|A|=\alpha_2$ (and otherwise we can let $A$ be arbitrarily large).
    To simplify the proof we assume all points of $A$ have the same $c$-label (obviously, at least half of them have).
    For every $x \in A$ fix some $z_x \in N(x)$ with $c(z_x) \ne c(x)$, and let $B=\{z_x:x \in A\}$.
    Clearly $|B| \le |A|$.
    Moreover, for every $x \in A$ let $d_B(x) = |N(x) \cap B|$.
    By construction, $1 \le d_B(x) \le |B| \le |A|$.
    By a binning argument, then, there exists a subset $A' \subseteq A$ with $|A'| \ge \frac{|A|}{\lg_2 |A|}$ and some $d \in \{1,\ldots,|B|/2\}$ such that $d \le d_B(x) \le 2d$ for all $x \in A'$.
    Let then $B' = B \cap \bigcup_{x \in A'} N(x)$.
    Note that for all $x \in A'$ we have $|N(x) \cap B'| = |N(x) \cap B| = d_B(x)$.

    We now use $A'$ and $B'$ as support of a family of distributions.
    We then show that, when we pick a distribution uniformly at random from this family, achieving expected error $\eps$ requires drawing $\Omega(|A'|/\eps)$ samples.
    By Yao's minimax principle this implies that for \emph{every} algorithm there exists \emph{some} distribution from the family that yields the same bound.
    
    To begin with, define $p(z) = \frac{1}{d |A'|+ |B'|}$ to every $z \in B'$, and $p(x) = \frac{d}{d |A'| + |B'|}$ for every $x \in A'$.
    Note that $p(z) = \frac{p(x)}{d}$ for every $z \in B'$ and $x \in A'$.
    Let $A' = \{x_i : i = 1,\ldots,K\}$.
    For simplicity, and without loss of generality, we assume $K$ is even.
    Now, for every string $\bs \in \{-1,+1\}^K$ that sums to zero, define the distribution $\D_{\bs}$ as follows: : $\D_{\bs}(z) = p(z)$, for $z \in B'$ and $\D_{\bs}(x_i) = p(x)(1 + s_i\cdot\sqrt{\eps})$ for $i \in [K]$.
    One can check that $\D_{\bs}$ is indeed a distribution.
    Now let $\bs$ be chosen uniformly at random (again so that it sums to zero).
    Since $\D_{\bs}(A') \ge \frac{1}{2}$, then
    \begin{align}
        \E_{\bs}[\ell_{\D_{\bs}}(\hat{y}_{\D_{\bs}},y_{\D_{\bs}})]
        &\ge \frac{1}{2} \E_{\bs}\E_{S\sim{\D}_{\bs}^m} \left[\frac{1}{K} \sum_{i=1}^K \ell_{x_i}(\hat{y}_{\D_{\bs}},y_{\D_{\bs}})\right]\,.
    \end{align}  

    Now, without loss of generality, we may assume the learner knows the concept $c \in \C$ the sets $A'$ and $B'$, as well as the function $p(\cdot)$, see above; it only ignores $\bs$.
    We may also assume the predictor's output on $x_i \in A'$ is function only of the number of times $M_{\bs,i}^m$ that $x_i$ appears in the training set.
    Clearly, $M_{\bs,i}^m \sim \operatorname{Bin}(m,\D_{\bs}(x_i))$.
    The Kullback-Leibler divergence and Pinsker's inequality then yield, for any two $\bs,\bs'$ as above:
    \begin{align}
        \tvd{M_{\bs,i}^m- M_{\bs',i}^m} \le \O\left(\sqrt{\frac{m}{\D_{\bs}(x_i)}} \cdot \left|\D_{\bs}(x_i) - \D_{\bs'}(x_i)\right| \right) \le \O\left(\sqrt{\frac{m \,\eps}{K}} \right) \;.
    \end{align}
    Moreover, if $\bs,\bs'$ differ on the $i$-th coordinate, then  $|y_{\D_{\bs}}(x_i) - y_{\D_{\bs'}}(x_i)| \ge \Omega(\sqrt{\eps})$. 
    Since $\bs$ is uniformly random, the expected square loss at $x_i$ is therefore at least
    \begin{align}
        \Omega(\eps)\left(1- \tvd{M_{\bs,i}^m- M_{\bs',i}^m} \right) =
        \Omega(\eps)\left(1- \O\left(\sqrt{\frac{m \,\eps}{K}}\right)\right) \;.
    \end{align}
    By averaging over all $x_i$, to have total expected loss at most $\eps$, one needs a sample of size $m = \Omega\left(\frac{K}{\eps}\right) = \Omega\bigl(\frac{\alpha_2}{\eps \log\alpha_2}\bigr)$,
    which concludes the proof.
\end{proof}
Note that the lower bound in \Cref{lem:LB2} applies even if the target concept $c$ is known. The uncertainty comes from the marginal distribution $\D$ that determines the averaged labels $\yD$.

\section{Discussion}\label{sec:discussion}
We provide some comments and discuss multiple interesting extensions of our learning problem.

First note that, despite speaking of a graph $G$ in our exposition, our results also apply on infinite domains which may even be uncountable. In particular, they are applicable in standard geometric settings such as in Euclidean spaces. For example, the neighborhoods can be hyperrectangles in $\mathbb{R}^d$ or balls in a general metric space. In this case, the independence number of the graph $\alpha(G)$ becomes the packing number of the metric space.
Further note that we can easily extend our analysis to other loss functions, which typically only requires an alternative of \Cref{lem:heavyVertex}.
For example, if we care about the absolute loss instead of the squared loss, we have to use $\scO(1/\eps^2)$ samples to estimate the average label in each neighborhood instead of $\scO(1/\eps)$ as before.
The remaining arguments are otherwise independent of the particular loss adopted.

Beyond just predicting averaged labels, there can be situations where also the input sample consists of averaged labels instead of the original labels of the instances.
In particular, there could be a second graph $G_{\mathrm{in}}=(\X,E_{\mathrm{in}})$ that models how the labels are averaged for the training set: the training set $S\subseteq\X$ contains points $x$ labeled by $\yD_{\mathrm{in}}(x) = \E_{x' \sim \D}\bigl[c(x') \mid x' \in N_{G_{\mathrm{in}}}[x]\bigr]$ or alternatively by the empirical average $\yD_{\mathrm{in}}(x) = \frac{1}{|N_{G_{\mathrm{in}}}[x]\cap S|}\sum_{x'\in N_{G_{\mathrm{in}}}[x]\cap S} c(x')$.
Studying the learnability of $(G_{\mathrm{in}},G,\C)$---with input labels $\yD_{\mathrm{in}}$ and target labels $\yD$---is a promising research direction.
Our studied learning problem corresponds to the case $E_{\mathrm{in}}=\emptyset$ but arbitrary $E$, while different variants of the \emph{learning from label proportions} problem correspond to either $E=\emptyset$ but arbitrary $E_{\mathrm{in}}$ (see the mentioned references in \Cref{sec:intro}) or to $E=E_{\mathrm{in}}$ \citep{iyer2016privacy,fish2017complexity}.

Moreover, as noted in \Cref{sec:further}, extending our analysis from estimating $\E[c(x')\mid x'\in N[x]]$ to $\E[\ind{g_x(x')\neq c(x')}\mid x'\in N[x]]$ would be interesting.
In this case, $g_x:N[x]\to \{0,1\}$ is a local classifier from a known (partial) hypothesis class $\mathcal{H}$ (say linear classifiers) defined on each neighborhood.
This setup would encompass our learning problem here (with $g_x$ being constant functions) as well as the auditing framework of \citet{bhattacharjee2024auditing}.

Also, extensions to a regression setting with target labels in $[0,1]$ are possible.
This would allow to formulate estimation problems like \emph{``what is the average income of all people with at least my age?''}, where \emph{income} is the target label and the directed neighborhoods are given by \emph{age}.

Additionally, there might be cases where we want to  assign similarities to each neighbor. Instead of $\E_{x'}[c(x')\mid N[x]]$ we want to estimate $\E_{x'}[w(x,x')c(x')\mid N[x]]$ for a given similarity function $w: \X\times\X \to [0,1]$, e.g., Gaussian. %
Our results correspond to the uniform similarity $w(x,x')=1$.
Learning such weighted averages in the special case of $N[x]=\X$ would bring our setup closer to the smoothed analysis of \citet{chandrasekaran2024smoothed}.

Finally, in \Cref{sec:ERM} we discuss how empirical risk minimization (ERM) can be applied to our problem. In \Cref{sec:graphpairs} we discuss a generalization of our results to a setting where instead of a fixed graph $G$,  graphs $G$ and concepts $c$ come as pairs from a known class $\F\subseteq\G\times\C$.

\bibliography{references}

\clearpage
\appendix
\crefalias{section}{appendix} %
\section{Missing proofs} \label{app:proofs-ub}
This section contains the missing proofs and details which enable our main results from \Cref{sec:main}.
We use the following simple estimation lemma for the squared loss.
\heavyVertexLemma*
\begin{proof}
    For any fixed $x \in \X$, define $f(x)$ to be the empirical mean of $c$-labels over points in $N[x] \cap S$; that is for any $x \in \X$ we can write
    \begin{equation}
        f(x) = \frac{1}{M_x} \sum_{i=1}^m c(x_i)\ind{x_i \in N[x]} \;,
    \end{equation}
    where $M_x = \sum_{i=1}^m \ind{x_i \in N[x]}$ is the number of sampled points in $S$ that belong to $N[x]$.  
    Given this, we know that $h_S(x) = f(x)$ whenever $M_x > 0$, and otherwise $h_S(x)$ is the output returned by the OIG predictor.

    Fix $x \in \X$ to be any point satisfying $\D(N[x]) \ge \lambda$.
    Defining $p_x = \Pr_{x' \sim \D}(x' \in N[x]) = \D(N[x])$, we can notice that $M_x \sim \operatorname{Bin}(m,p_x)$ is a binomial random variable with mean $\E_S[M_x] = mp_x$.
    Consequently, a multiplicative Chernoff bound for binomials shows that
    \begin{equation}
        \Pr_{S}\bigl(M_x \le mp_x/2\bigr)
        \le e^{-mp_x/8} \le e^{-m\lambda/8} \,.
    \end{equation}
    where the last inequality is due to the assumption that $p_x = \D(N[x]) \ge \lambda$.
    On the other hand, we will now bound the expected error, conditioned the event that $M_x$ is not small. Specifically, for a fixed $x$ and a fixed integer $M \ge mp_x/2$, let $\hat{y}(S) = \frac{1}{M}\sum_{i=1}^M c(x_i)$ given a sample $S$ of size $M$, and $y = \yD(x)$. Observe that
    \begin{equation}
        \E_S\bigl[(f(x) - \yD(x))^2 \mid M_x = M\bigr] = \E_{S_x \sim \D_x^M}\bigl[(\hat{y}(S_x) - y)^2\bigr] \;,
    \end{equation}
    where $\D_x$ is the distribution   $\D$ conditioned on $N[x]$.
    Then, notice that the quantity we are considering here is the variance of an empirical mean estimator over i.i.d.\ random variables bounded in $[0,1]$, and so by \Cref{clm:empirical} we have $\E_{S_x \sim \D_x^M}\bigl[(\hat{y}(S_x) - y)^2\bigr] \le 1/(4M)$.
    Therefore,  it follows that 
    \begin{equation}
        \E_S\bigl[\ell_x(h_S) \mid M_x=M\bigr]
        = \E_S\bigl[(f(x) - \yD(x))^2 \mid M_x = M\bigr]
        \le \frac{1}{4M} \le \frac{1}{2mp_x} \le \frac{1}{2m\lambda} \;.
    \end{equation}
    By then taking the law of total expectation for all possible values $M \ge mp_x/2$, we also get that,
        \begin{equation}
        \E_S\bigl[\ell_x(h_S) \mid M_x \ge mp_x/2\bigr] \le \frac{1}{2m\lambda} \;.
    \end{equation}
    Lastly, combining all these observations together, and using the fact that $\ell_x(h_S) \in [0,1]$, we finally derive that
    \begin{align}
        \E_S\bigl[\ell_x(h_S)\bigr]
        &\le \E_{S}\bigl[(f(x) - \yD(x))^2 \mid M_x \ge mp_x/2\bigr] + \Pr_{S}\bigl(M_x \le mp_x/2\bigr) \\
        &\le \frac{1}{2m\lambda} + e^{-m\lambda/8} 
        \le \frac{7}{2m\lambda} \;,
    \end{align}
    which concludes the proof.
\end{proof}

The proof of the above lemma relies on this standard result about the variance of the empirical mean of bounded random variables.
\begin{proposition}\label{clm:empirical}
 For a positive integer $M \in \nat_+$, let $a_1,\dots,a_M$ be i.i.d.\ $[0,1]$-valued random variables with $\E[a_i]=\mu$, and define $\bar a$ to be their average. Then,
\[
    \E\bigl[(\bar a-\mu)^2\bigr] \le \frac{1}{4M} \;.
\]
\end{proposition}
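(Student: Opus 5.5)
The plan is the standard bias–variance computation for an empirical mean of i.i.d.\ bounded variables. Since $\E[\bar a]=\mu$, the quantity $\E[(\bar a-\mu)^2]$ is exactly $\operatorname{Var}(\bar a)$. By independence of $a_1,\dots,a_M$ the cross terms vanish, which gives
\[
\operatorname{Var}(\bar a)=\frac{1}{M^2}\sum_{i=1}^M \operatorname{Var}(a_i)=\frac{\operatorname{Var}(a_1)}{M}\,.
\]
So the claim reduces to showing that any $[0,1]$-valued random variable $a$ with mean $\mu$ satisfies $\operatorname{Var}(a)\le 1/4$.

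For this single-variable bound I will use $a^2\le a$, which holds pointwise because $a\in[0,1]$. Taking expectations gives $\E[a^2]\le\mu$, and hence
\[
\operatorname{Var}(a)=\E[a^2]-\mu^2\le \mu-\mu^2=\mu(1-\mu)\,.
\]
Maximizing $\mu(1-\mu)$ over $\mu\in[0,1]$ gives the value $1/4$, attained at $\mu=1/2$. Combining this with the previous display yields $\E[(\bar a-\mu)^2]\le \frac{1}{4M}$.

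There is no real obstacle here. The only point needing care is that the variance bound must come from the pointwise inequality $a^2\le a$ rather than from assuming the $a_i$ are Bernoulli, since the statement allows arbitrary $[0,1]$-valued variables. In the application within \Cref{lem:heavyVertex} the $a_i$ are in fact binary labels $c(x_i)$, so the bound is even equality-tight there. Independence is used only to drop the covariance terms, so pairwise uncorrelatedness would already suffice.
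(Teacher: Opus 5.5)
Your proposal is correct and follows the same route as the paper: write the error as $\operatorname{Var}(\bar a)=\operatorname{Var}(a_1)/M$ using independence, then bound the variance of a single $[0,1]$-valued variable by $1/4$. The only difference is that the paper cites Popoviciu's inequality for the last step, whereas you prove the $[0,1]$ case directly from $a^2\le a$. Your argument also gives the slightly sharper intermediate bound $\mu(1-\mu)$.
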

\begin{proof}
Since $a_1,\dots,a_M$ are i.i.d.\ with mean $\mu$, we have $\mathbb{E}[\bar a]=\mu$ and hence $\mathbb{E}\big[(\bar a-\mu)^2\big]=\mathrm{Var}(\bar a)$.  Moreover, by independence of the $a_i$'s we also have $\mathrm{Var}(\bar a) = \mathrm{Var}(a)/M$. 
Since $a \in [0,1]$, we have $\mathrm{Var}(a) \le 1/4$ by Popoviciu's inequality.
Combining the above yields the claim. 
\end{proof}

For the high-probability upper bound in \Cref{thm:highProbUB}, we rely on the following lemma showing how the expected squared distance of the median of bounded random variables from a target random variable improves whenever more than half of them are sufficiently close in expectation.
\begin{lemma}\label{lemma:median}
Let $y_1,\dots,y_k,y$ be random variables and let $\eps \ge 0$. If $p> k/2$ of the $i\in[k]$ satisfy $\E[(y_i-y)^2]\le \eps$ then the median $\ymed=\operatorname{median}(y_1,\dots,y_k)$ satisfies  $\E[(\ymed-y)^2]\le \frac{2\eps}{p/k-1/2}$.
\end{lemma}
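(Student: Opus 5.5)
The argument is pointwise first, then integrated. Let $G\subseteq[k]$ be the (deterministic) set of $p>k/2$ ``good'' indices with $\E[(y_i-y)^2]\le\eps$. Fix any realization of the random variables. If $\ymed\ge y$, then at least $\lceil k/2\rceil$ of the $y_i$ satisfy $y_i\ge \ymed$. Since at most $k-p$ indices lie outside $G$, at least $\lceil k/2\rceil-(k-p)\ge p-k/2$ good indices satisfy $y_i\ge\ymed\ge y$. For each such index, $(y_i-y)^2\ge(\ymed-y)^2$. The case $\ymed\le y$ is symmetric, using $y_i\le \ymed$. (For even $k$ I fix the median to be the lower or upper middle order statistic; an average of the two middle values also works, since both middle values satisfy the same counting bound.)

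This gives the pointwise inequality
\[
(p-k/2)\,(\ymed-y)^2 \;\le\; \sum_{i\in G}(y_i-y)^2 .
\]
Taking expectations and using linearity yields
\[
(p-k/2)\,\E[(\ymed-y)^2]\le p\,\eps,
\qquad\text{so}\qquad
\E[(\ymed-y)^2]\le \frac{p\,\eps}{p-k/2}=\frac{(p/k)\,\eps}{p/k-1/2}\le\frac{\eps}{p/k-1/2}.
\]
This is already within the claimed bound $\frac{2\eps}{p/k-1/2}$, so the factor $2$ is slack. No independence among the $y_i$ is needed.

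The only delicate step is the counting with the median convention. I need to check that at least $p-k/2$ good indices lie weakly on the far side of $\ymed$ from $y$. Write $q=\lceil k/2\rceil$ for the number of indices with $y_i\ge\ymed$ under the convention above. Then the number of good indices among them is at least $q-(k-p)\ge p-k/2>0$. I also need $G$ to be a fixed index set, which it is, since the hypothesis is about expectations. Everything else is routine.

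In the application in \Cref{lemma:highProbUB}, the $h_i$ are conditioned on the event that at least $4k/5$ of them satisfy $L(h_i)\le 10\eps'$, and the expectation is over $x\sim\D$. The same pointwise argument applies with $y=\yD(x)$, with $G$ fixed given $S$, which gives $L(h_{\mathrm{med}})\le\frac{2\cdot 10\eps'}{3/10}\le 70\eps'$.
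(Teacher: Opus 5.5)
Your proof is correct and is essentially the paper's argument. Pointwise, at least $p-k/2$ good indices lie weakly beyond $\ymed$ on the side away from $y$, so $(\ymed-y)^2$ is bounded by their squared errors, and expectations are then taken over the fixed good set. The one difference is that you handle $\ymed\ge y$ and $\ymed\le y$ in a single pointwise inequality. The paper instead bounds the positive and negative parts separately, each via the crude step $((y_i-y)_{+})^2\le (y_i-y)^2$, and adds them; that is where its extra factor $2$ comes from. Your route avoids this and gives the sharper $\frac{p\,\eps}{p-k/2}$.
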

\begin{proof}
    We use the notation $(\cdot)_+=\max(0,\cdot)$ and $(\cdot)_-=\min(0,\cdot)$. Note that 
    \begin{equation}\label{eq:posnegdecomposition}
        (y_i-y)^2 = ((y_i-y)_+)^2+ ((y_i-y)_-)^2
    \end{equation}
    holds for all $i$ and the same holds for $\ymed$ instead of $y_i$.
    Let us first assume that $y\le\ymed$. By definition of a median half of the $i\in[k]$ satisfy $y_i\le \ymed$ and half $y_i\ge \ymed$. Call an $i$ \emph{good} if $\E[(y_i-y)^2]\le \eps$ and denote the set of all good indices by $P\subseteq[k]$. As we have $p$ good $i$'s, there must be $p-k/2>0$ good $i$'s such that $y_i\ge\ymed\ge y$. Denote the good indices $i$ satisfying $y_i\ge \ymed$  by $P_+\subseteq P$.
    By the choice of $P_+$, we have
    \begin{align}\label{eq:medToAvg}
        ((\ymed-y)_+)^2
        &\le \frac{1}{|P_+|}\sum_{i\in P_+}((y_i-y)_+)^2 \\
        &\le \frac{1}{|P_+|}\sum_{i\in P_+}((y_i-y)_+)^2 + \frac{1}{|P_+|}\sum_{i\in P\setminus P_+} ((y_i-y)_+)^2 \\
        &= \frac{1}{|P_+|}\sum_{i\in P}((y_i-y)_+)^2 \;.
    \end{align}
    From \Cref{eq:posnegdecomposition} we also have that $((y_i-y)_+)^2 \le (y_i-y)^2$.
    Thus, taking expectations, we see that
    \begin{align}
        \E[((\ymed-y)_+)^2]&\le \frac{1}{|P_+|}\sum_{i\in P}\E[((y_i-y)_+)^2] \\
        &\le \frac{1}{|P_+|}\sum_{i\in P}\E[(y_i-y)^2] \\
        &\le \frac{\eps|P|}{|P_+|}\le \frac{\eps k}{p-k/2} \;.
    \end{align}
    By a symmetric argument the same holds for the case $y\ge \ymed$ (and correspondingly $(\ymed-y)_-$).
    Overall we get
    \begin{equation}
         \E[(\ymed-y)^2] = \E[((\ymed-y)_+)^2] + \E[((\ymed-y)_-)^2]\le \frac{2\eps}{p/k-\frac{1}{2}} \;.
    \end{equation}
\end{proof}

\section{The One-Inclusion Graph algorithm} \label{app:oi}

In this section we describe the classic One-Inclusion Graph algorithm (OIG), that is used in our \Cref{alg:main}.
We start by defining the one-inclusion graph of a concept (or hypothesis) class $\C$, where the idea is to translate a classification learning problem to the language of graphs.

\begin{definition}[One-inclusion graph; \citealp{haussler1994predicting}]
    The \emph{one-inclusion graph} of $\C\subseteq \{0,1\}^n$ is a graph $\mathcal{G}(\C)=(V,E)$ defined as follows.
    The vertex set is $V=\C$, and the edge set $E$ corresponds to all pairs $(v,v') \in V$ such that the Hamming distance between $v,v'$ is exactly $1$; that is, there exists $i \in [n]$ such that $v(i) \neq v'(i)$ and for all $j\neq i$, $v(j) = v'(j)$. 
\end{definition}
We now define an orientation of the graph in the standard way. 

\begin{definition}
    An \emph{orientation} of the graph $(V,E)$ is a mapping $\sigma:E\to V$ such that $\sigma(e)\in e$ for each edge $e\in E$.
\end{definition}

The OIG captures a model for {transduction in machine} learning.
A key observation of \citet{haussler1994predicting} is that this model captures an essential ingredient of general PAC learnability; see also \citet{rubinstein2006shifting,daniely2014optimal, brukhim2022characterization,bressan2025fine}. 
The OIG algorithm is presented in \Cref{algo:one_inc} below. 

\begin{algorithm}[H]
\caption{The one-inclusion algorithm $\A_\C$ for $\C\subseteq\Y^{\X}$} \label{algo:one_inc}
\textbf{Input:} A $\C$-realizable sample $S = \bigl((x_1, y_1),\dots,(x_m, y_m)\bigr)$. \\
\textbf{Output:} A hypothesis $\A_{\C}(S)=f_S:\X \to \Y$. \\
For each $x \in \X$, the value $f_S(x)$ is computed as follows.
\begin{algorithmic}[1]
\STATE Consider the class of all patterns over the \emph{unlabeled data}
{$\C|_{(x_1,\dots,x_m,x)} \subseteq \Y^{m+1}$}.
\STATE Find an orientation $\sigma$ of $\mathcal{G}(\C|_{(x_1,\dots,x_m,x)})$ that minimizes the maximum out-degree.
\STATE Let $e=e_{y_1,\dots,y_m}$ denote the edge determined by the labels in $S$ over the first $m$ coordinates.  

\STATE Set $f_S(x) \coloneq \sigma(e)_{m+1}$. That is, the label $f_S(x)$ is determined by the label in the last coordinate of the node points to by the orientation $\sigma$. 
\end{algorithmic}
\end{algorithm}
 
The algorithm gets as input a realizable training sample $S = ((x_1,y_1),\dots,(x_m,y_m))$ as well as an additional test point $x$.
Its goal is to provide a good prediction for the label of $x$.
An orientation of the graph provides the prediction for the label of $x$.

\section{VC classes allow ERM}\label{sec:ERM}
We briefly note that under stronger assumptions then required by our main theorem (\Cref{thm:UBexpectation}), we can instead use (an appropriate variant of) Empirical Risk Minimization (ERM) to design a $(G,\C)$-learner.
In particular, since the analysis of ERM leverages uniform convergence for VC classes, we additionally require that $\C$ has finite VC dimension $d=\vc(\C)$.

The alternative ERM-based algorithm performs what follows.
Using an  i.i.d.\ sample $S$ from $\D$ of size $\O\bigl(\frac{d\log(1/\eps) + \log(1/\delta)}{\eps}\bigr)$ labeled by any fixed concept $c \in \C$, use ERM over $S$ to obtain a predictor $h_{\mathrm{erm}}$ that satisfies $\Pr_{x \sim \D}(h_{\mathrm{erm}}(x) \neq c(x)) \le \eps/2$ with probability at least $1-\delta/2$; this follows from standard results on realizable PAC learning for VC classes \citep{blumer1989learnability}.

We now use a second sample $S'$ of size $\O\bigl(\frac{\alpha_2(G,\C) \log(1/\eps)}{\eps}\bigr)$ to compute the empirical mean labels over the neighborhood of a test point $x \sim \D$ using both the sampled points within $N[x]$ as well as the label $h_{\mathrm{erm}}(x)$ predicted by the ERM classifier.
Now let $M_x = |N[x] \cap S'|$ be the number of examples from $S'$. %
Then, the resulting predictor $h_{S'}$ outputs
\begin{equation}\label{eq:avgERM}
    h_{S'}(x) = \frac{1}{M_x+1}\biggl(h_{\mathrm{erm}}(x) + \sum_{x' \in N[x] \cap S'} c(x')\biggr)
\end{equation}
for any $x \in \X$.
While we can show that this predictor provides the desired guarantee in expectation, by a confidence amplification argument as in \Cref{thm:highProbUB}, given the success of $h_{\mathrm{erm}}$, we can extend the loss guarantee to one that holds with probability $1-\delta/2$.
We can show that the learning algorithm we just described yields the following sample complexity.
\begin{proposition}
Let $\C\subseteq\{0,1\}^\X$ be a class with VC dimension $d=\vc(\C)$ and $G=(\X,E)$ a directed graph. There exists an ERM-based $(G,\C)$-learner with sample complexity
\[
    m_{\mathrm{erm}}(\eps,\delta)
    = \scO\left(\frac{\bigl(d + \alpha_2(G,\C)\log(1/\delta)\bigr)\log(1/\eps)}{\eps}\right)\,.
\]
\end{proposition}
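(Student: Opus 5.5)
The plan is to analyze the two-stage ERM-based predictor of \Cref{eq:avgERM} in expectation, mirroring the proof of \Cref{thm:UBexpectation}, and then amplify confidence with \Cref{lemma:highProbUB}. First, draw a sample $S$ of size $\scO\bigl(\frac{d\log(1/\eps)+\log(1/\delta)}{\eps}\bigr)$. By the classical realizable PAC bound for VC classes \citep{blumer1989learnability}, with probability at least $1-\delta/2$ the hypothesis $h_{\mathrm{erm}}$ satisfies $\D(\{x: h_{\mathrm{erm}}(x)\neq c(x)\})\le \eps/c_0$ for a suitable constant $c_0$. From here on we condition on this event and treat $h_{\mathrm{erm}}$ as fixed.

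Next, for a fresh sample $S'$ of size $m$, bound $\E_{S'}[L(h_{S'})]$ by splitting test points $x$ into three classes.
\begin{itemize}
\item \textbf{Monochromatic neighborhoods.} If $N[x]$ is monochromatic, then every label in $N[x]\cap S'$ equals $c(x)=\yD(x)$. Hence the error at $x$ is at most $\ind{h_{\mathrm{erm}}(x)\ne c(x)}$, weighted by $1/(M_x+1)\le 1$. Integrating over $x$ gives at most $\eps/c_0$.
\item \textbf{Bichromatic neighborhoods with $\D(N[x])\ge\lambda$.} The term $h_{\mathrm{erm}}(x)$ shifts the empirical mean by at most $1/(M_x+1)$. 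So an analogue of \Cref{lem:heavyVertex} holds: the squared error is at most $2(\hat y-\yD(x))^2+2/(M_x+1)^2$. Combining the Chernoff estimate with \Cref{clm:empirical} gives $\scO(1/(m\lambda))$, where $\hat y$ is the empirical mean, taken to be $0$ when $M_x=0$ (a case absorbed into the Chernoff failure term).
\item \textbf{Light bichromatic points.} These are controlled by \Cref{lem:lightBichromaticVertex} applied to $G_c$, whose independence number is at most $\alpha_2$.
\end{itemize}

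To combine the last two classes, reuse verbatim the geometric binning $b_j=2^{-j+1}/\alpha_2$ from the proof of \Cref{thm:UBexpectation}. This yields a bichromatic contribution $\scO\bigl(\frac{\alpha_2\log(m/\alpha_2)}{m}\bigr)$, which is at most $\eps/c_0$ once $m=\scO(\alpha_2\log(1/\eps)/\eps)$. Note that no leave-one-out argument or OIG step is needed, since isolated test points are handled by $h_{\mathrm{erm}}$.

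Finally, amplify confidence. Run the averaging stage on $k=\scO(\log(1/\delta))$ independent copies of $S'$, each paired with the same fixed $h_{\mathrm{erm}}$, and take the pointwise median. Since the expected-error guarantee holds conditionally on $h_{\mathrm{erm}}$, the Markov, Chernoff and \Cref{lemma:median} argument of \Cref{lemma:highProbUB} applies unchanged and gives failure probability $\delta/2$. A union bound with the ERM event completes the argument.

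The total sample size is $\scO\bigl(\frac{d\log(1/\eps)+\log(1/\delta)}{\eps}+\frac{\alpha_2\log(1/\eps)\log(1/\delta)}{\eps}\bigr)$, which is within the stated bound. The main obstacle I anticipate is the heavy-neighborhood estimate with the extra pseudo-label. One must check that the bias it introduces, which is of order $1/M_x$ and hence contributes squared error of order $1/M_x^2\le 1/M_x$, does not break the $\scO(1/(m\lambda))$ rate. It must also not interact badly with the binning sum; I expect a constant-factor loss only.
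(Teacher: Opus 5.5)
Your proposal is correct and follows the same route as the paper: an ERM stage that succeeds with probability $1-\delta/2$, reuse of the bichromatic analysis of \Cref{thm:UBexpectation} (geometric binning plus \Cref{lem:lightBichromaticVertex} on $G_c$), confidence amplification via \Cref{lemma:highProbUB} with $h_{\mathrm{erm}}$ held fixed, and a final union bound. Your explicit bound $2(\hat y-\yD(x))^2+2/(M_x+1)^2$ for the pseudo-labeled average, and your absorption of ERM mistakes into the monochromatic case, are more careful than the paper's one-line claim that including the test point is ``equivalent to sampling $M_x+1$ points from $N[x]$''; that claim glosses over the bias $(c(x)-\yD(x))/(M_x+1)$, and your version also avoids the paper's conditioning of the test point on $h_{\mathrm{erm}}(x)=c(x)$.
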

\begin{proof}
    Fix $c \in \C$ to be the ground-truth labeling.
    As already mentioned, we know that the ERM predictor $h_{\mathrm{erm}}$ guarantees $\Pr_{x \sim \D}(h_{\mathrm{erm}}(x) \neq c(x)) \le \eps/2$ with probability $1-\delta/2$.
    By standard realizable PAC learning results, as mentioned above, this can be done using $\O\bigl(\frac{d\log(1/\eps) + \log(1/\delta)}{\eps}\bigr)$ i.i.d.\ samples.
    We condition on this event in what follows.

    On a mass of at most $\eps/2$ of the points, $h_{\mathrm{erm}}$ errs. For these points we can simply assume the worst-case upper bound of $1$ on the loss, and condition now on the event that $h_{\mathrm{erm}}$ predicts the true label $c(x)$ on the test point $x$.

    Next define $h_{S'}$ as in \Cref{eq:avgERM} using the second sample $S'$. 
    The proof of \Cref{thm:UBexpectation} goes through almost exactly as before. In particular, the cases $\Ev_1$ and $\Ev_2$ remain valid and we can choose the constants in the sample complexity such that the expected error is at most $\eps/2$ in case $\Ev_1$ (and $0$ in case $\Ev_2$). The only difference is here that we average over $M_x+1$ labels in $N[x]$. This is valid due to the following two reasons. First, as we get the correct label $h_{\mathrm{erm}}=c(x)$ for the test point $x\in N[x]$. Second, as $x$ is i.i.d.\ from $\D$ averaging over $(N[x]\cap S')\cup \{x\}$ is equivalent to sampling $M_x+1$ points from $N[x]$ and the analysis proceeds as before (with one additional labeled point). In case $\Ev_3$, where $M_x=0$, it simply holds that $\yD(x)=c(x)=h_{\mathrm{erm}}(x)$ and thus we predict correctly.

    Finally, we again apply \Cref{lemma:highProbUB} to turn the in-expectation guarantee of $h_{S'}$ into one with probability $1-\delta/2$ instead of just in expectation, always given that $h_{\mathrm{erm}}$ succeeds.
    We recall it suffices to independently construct $\O(\log(1/\delta))$ predictors $h_{S'}$ as described above, where each one of them requires $\O\bigl(\frac{\alpha_2(G,\C)\log(1/\eps)}{\eps}\bigr)$ i.i.d.\ samples to provide its individual guarantee.
    
    A union bound over the failure of $h_{\mathrm{erm}}$ and that of the median-based predictor from \Cref{lemma:highProbUB} concludes the proof.
\end{proof}

Note that $d\ge\alpha_1(G,\C)$ by definition, and indeed the gap can be arbitrarily large; e.g., if $G$ is a clique then $d=\vc(\C)$ grows arbitrarily as the class $\C$ becomes more complex while $\alpha_1(G,\C) \le 1$ for any $\C$.
This shows that this simpler ERM-based approach is sufficient for learning if $\C$ is a VC class, but can lead to a significantly worse sample complexity than the one achieved by our \Cref{alg:main}, and proved in \Cref{thm:highProbUB}.

We remark that the question of whether ERM also works with a sample complexity depending on $\alpha_1(G,\C)$ instead of $d=\vc(\C)$ remains open. The next section shows that at least in a more general setting, such ERM-based approaches are not sufficient for learnability.

\section{Extension to multiple graphs}\label{sec:graphpairs}
In a possible generalization of our problem, instead of having a single fixed graph, we can imagine that the concept $c$ and graph $G$ come as a pair $(G,c)\in \F$ from a known joint class $\F\subseteq \G\times\C$. Here $\G$ is a family of graphs all with the same node set $\X$. The target graph $G$ itself is fixed yet unknown by the learner and only accessible through a neighborhood oracle on the sample, which allows to check whether any two points in the sample are adjacent in the graph. This clearly generalizes our setting as it can be recovered by $\G=\{G\}$. We denote by $L_G(h)$ the averaged loss of a classifier $h$ as in \Cref{eq:loss} with neighborhoods and averaged labels $\yD(\cdot)$ given by the graph $G$.

\begin{definition}
Let $\C\subseteq\{0,1\}^\X$, let $\G$ be a family of directed graphs on $\X$, and let $\F\subseteq \G\times\C$. A learning rule $\A$ is an \emph{$\F$-learner} if there exists a sample complexity $m:(0,1)^2\to\nat$ such that for every distribution $\D$ over $\X$, every $(G,c)\in \F$, and every $\eps,\delta\in(0,1)$ the following holds. When given a set $S$ of $m \ge m(\eps,\delta)$  i.i.d.\ examples generated by $\D$ and labeled by $c$, then the learning rule returns a predictor $h_S = \A(S)$ such that $L_G(h_S) \le \eps$ with probability at least $1-\delta$ over $S$. 
\end{definition}

Our lower bounds apply also in this setting after a small modification.
In particular, $\alpha_1(\F)$ is the largest set $I\subseteq\X$ that is shattered by a set $S\subseteq\C$ (in the usual sense) with the additional requirement that each $c\in S$ can be extended to a tuple $(G,c)\in \F$ where $I$ is independent in $G$.
That is, $\alpha_1(\F)$ is the size of a largest subset $I \subseteq \X$ such that $|\{c \cap I : (G,c)\in\F, I \text{ independent in } G\}| = 2^{|I|}$.
We also adapt $\alpha_2(\F)=\sup_{(G,c)\in\F}\alpha_2(G,c)$. We see that we can replace $\alpha_1$ and $\alpha_2$ in the previous lower bounds with these modifications.

Also our algorithmic upper bound remains valid. Indeed, one can check that \Cref{thm:UBexpectation} goes through exactly as before when we replace the fixed graph $G$ with the chosen $(G,c)\in\F$. The reason is that already for the original problem we had no dependence on the full graph and only used neighborhood oracle access.

We thus get the following sample complexity bounds for this generalized problem:

\begin{equation}
\Omega\!\left(\frac{\alpha_1(\F)+\alpha_2(\F)/\log(\alpha_2(\F))+\log(1/\delta)}{\eps}\right) \le m_\F(\eps, \delta) \le \scO\!\left(\frac{\alpha_1(\F)+\alpha_2(\F)\log(1/\eps)}{\eps}\log\frac{1}{\delta}\right) .
\end{equation}

Note that this problem generalizes not only standard PAC learning but also learning with \emph{partial concept classes} \citep{alon2022theory}. In particular, let $\tilde\C\subseteq\{0,1,\star\}^\X$ be a partial concept class. We can encode any partial concept $\tilde c\in\tilde\C$ by a pair $(G,c)$ where $c$ agrees with $\tilde c$ on the support of $\tilde c$ and outside of the support we set $c$ to $1$. The graph $G$ is given by isolated vertices on the support of $\tilde c$ and a clique on the rest. Call the collection of all these pairs $\F$.  It is easy to verify that there exists an $\F$-learner if and only if the partial class $\tilde\C$ is PAC learnable.

As uniform convergence (and thus ERM and proper learning) does not succeed for general partial concept classes, this shows that uniform convergence will fail for the more general problem of learning with pairs $\F\subseteq\G\times\C$ too.

\vskip-1em

\end{document}